
\documentclass[10pt,twocolumn,letterpaper]{article}

\usepackage[pagenumbers]{cvpr}      
\usepackage{subfiles} 





\newcommand{\Prob}{\mathbb{P}}

\newcommand{\LF}{\mathcal{L}}

\newcommand{\R}{\mathbb{R}}

\newcommand{\E}{\mathbb{E}}

\usepackage{amsmath,amssymb,mathtools}
\usepackage{amsthm}
\usepackage{algorithm}
\usepackage{algpseudocode}
\usepackage{booktabs}
\usepackage{graphicx}
\usepackage{placeins}
\usepackage{dblfloatfix}
\usepackage{nicefrac}
\usepackage{multirow}
\usepackage[usenames,dvipsnames]{xcolor}

\definecolor{cvprblue}{rgb}{0.21,0.49,0.74}


\usepackage[pagebackref,breaklinks,colorlinks,allcolors=cvprblue]{hyperref}

\usepackage{aliascnt}

\newtheorem{theorem}{Theorem}[section]

\newaliascnt{lemma}{theorem}
\newtheorem{lemma}[lemma]{Lemma}
\aliascntresetthe{lemma}

\newaliascnt{proposition}{theorem}
\newtheorem{proposition}[proposition]{Proposition}
\aliascntresetthe{proposition}

\newaliascnt{corollary}{theorem}
\newtheorem{corollary}[corollary]{Corollary}
\aliascntresetthe{corollary}

\theoremstyle{definition}
\newaliascnt{definition}{theorem}
\newtheorem{definition}[definition]{Definition}
\aliascntresetthe{definition}

\newaliascnt{assumption}{theorem}
\newtheorem{assumption}[assumption]{Assumption}
\aliascntresetthe{assumption}

\newaliascnt{example}{theorem}

\aliascntresetthe{example}

\theoremstyle{remark}
\newaliascnt{remark}{theorem}
\newtheorem{remark}[remark]{Remark}
\aliascntresetthe{remark}

\newaliascnt{note}{theorem}

\aliascntresetthe{note}








%
\definecolor{cvprblue}{rgb}{0.21,0.49,0.74}

\usepackage[nameinlink,capitalise,noabbrev]{cleveref}
\crefname{theorem}{theorem}{theorems}      
\Crefname{theorem}{Theorem}{Theorems}
\crefname{lemma}{lemma}{lemmas}            
\Crefname{lemma}{Lemma}{Lemmas}
\crefname{proposition}{proposition}{propositions}
\Crefname{proposition}{Proposition}{Propositions}
\crefname{corollary}{corollary}{corollaries}
\Crefname{corollary}{Corollary}{Corollaries}
\crefname{definition}{definition}{definitions}
\Crefname{definition}{Definition}{Definitions}
\crefname{assumption}{assumption}{assumptions}
\Crefname{assumption}{Assumption}{Assumptions}
\crefname{example}{example}{examples}      \Crefname{example}{Example}{Examples}
\crefname{remark}{remark}{remarks}         \Crefname{remark}{Remark}{Remarks}
\crefname{note}{note}{notes}   
\Crefname{note}{Note}{Notes}
\newcommand{\methodname}{\textbf{RR-DU}}

\title{Fully Decentralized Certified Unlearning}

\author{Hithem Lamri\\
NYU Abu Dhabi\\
Abu Dhabi\\
{\tt\small hithem.lamri@nyu.edu}
\and
Michail Maniatakos\\
NYU Abu Dhabi\\
Abu Dhabi\\
{\tt\small michail.maniatakos@nyu.edu}
}

\begin{document}

\maketitle
\begin{abstract}
Machine unlearning (MU) seeks to remove the influence of specified data from a trained model in response to privacy requests or data poisoning. While certified unlearning has been analyzed in centralized and server-orchestrated federated settings (via guarantees analogous to differential privacy, DP), the decentralized setting—where peers communicate without a coordinator—remains underexplored. We study certified unlearning in decentralized networks with fixed topologies and propose \methodname, a random-walk procedure that performs one projected gradient ascent step on the forget set at the unlearning client and a geometrically distributed number of projected descent steps on the retained data elsewhere, combined with subsampled Gaussian noise and projection onto a trust region around the original model. We provide (i) convergence guarantees in the convex case and stationarity guarantees in the nonconvex case, (ii) $(\varepsilon,\delta)$ network-unlearning certificates on client views via subsampled Gaussian R\'enyi DP (RDP) with segment-level subsampling, and (iii) deletion-capacity bounds that scale with the forget-to-local data ratio and quantify the effect of decentralization (network mixing and randomized subsampling) on the privacy–utility trade-off. Empirically, on image benchmarks (MNIST, CIFAR-10), \methodname~ matches a given $(\varepsilon,\delta)$ while achieving higher test accuracy than decentralized DP baselines and reducing forget accuracy to random guessing (\(\approx 10\%\)).

\end{abstract}    
\section{Introduction}
Machine Unlearning (MU) aims to remove the influence of a designated subset—the unlearning set—from a trained model while preserving performance on the retained data. As ML is deployed in healthcare, finance, and vision, regulations such as GDPR~\cite{gdpr} and CCPA~\cite{ccpa} formalize a \emph{right to be forgotten}, and the growing risk of data poisoning further motivates MU. Simply deleting records is insufficient—their effect persists in the model parameters. The naive remedy, retraining from scratch without the unlearning set, is often impractical for large models and production systems. This raises the core challenge: efficiently erasing a subset’s influence while maintaining utility.

Machine Unlearning was first introduced by~\citet{firstunlearning}, and the field has since grown into two main categories: \textit{exact unlearning} (retraining from scratch) and \textit{approximate unlearning} (reducing the cost of retraining while matching the from-scratch reference up to a tolerance). MU was later extended to Federated Learning as Federated Unlearning (FU)~\cite{fu_survey}, with heavy study in both server-orchestrated~\cite{pmlr-v54-mcmahan17a} and fully decentralized scenarios. The decentralized nature of FL has led many works to adopt relaxed assumptions, and obtaining rigorous certified unlearning guarantees~\cite{DBLP:conf/nips/SekhariAKS21,DBLP:conf/iclr/AllouahKGK25} is difficult; results often hold only under (strongly) convex objectives, while many server-based and decentralized FU methods~\cite{halimi, manaar, elasticsga, federaser_fu3, dec_un1,dec_un2,dec_un3} are heuristic and lack theoretical guarantees.

\citet{fedrecovery} provided Differential Privacy (DP)~\cite{DBLP:journals/fttcs/DworkR14}-based unlearning guarantees for server-orchestrated FL via perturbed improved retraining where all clients and the server collaborate, which is computationally expensive by design. On the other hand,~\citet{qiao2025pdudt} gave the first certified unlearning for fully decentralized unlearning under dynamic topologies, introducing two algorithms that involve all clients or all neighbors and require storing past gradients; this induces high storage/compute overhead and temporal test-accuracy drops during unlearning. In addition, that work does not articulate decentralized \emph{view}-specific issues in certification, lacks a deletion-capacity analysis~\cite{DBLP:conf/nips/SekhariAKS21,DBLP:conf/iclr/AllouahKGK25,DBLP:journals/corr/abs-2502-17323} that formalizes the utility–unlearning trade-off, and does not separate DP-based guarantees from certified unlearning as done in centralized settings~\cite{DBLP:conf/nips/SekhariAKS21,DBLP:conf/iclr/AllouahKGK25,DBLP:journals/corr/huang_cannone}. Altogether, certified decentralized unlearning on fixed graphs remains underexplored and needs a rigorous reformulation.

Following this line of work, and inspired by privacy amplification from decentralization~\cite{DBLP:conf/aistats/CyffersB22} and subsampling~\cite{DBLP:conf/nips/BalleBG18,DBLP:conf/focs/FeldmanMTT18}, we address limitations in certified decentralized unlearning. We ask: can unlearning be performed \emph{autonomously} without involving all clients on a fixed topology; what are the effects of decentralization (routing, mixing, trust regions) on the utility–privacy trade-off and on deletion capacity; and is Decentralized DP (DDP) still a good candidate for certified unlearning? To answer these questions, we reformulate certified unlearning in decentralized settings where each client has only a partial view of the network; we then introduce a new approximate decentralized certified unlearning algorithm on a fixed topology that combines projected noisy gradient ascent on the unlearning client only with randomized client sampling, called \methodname\ (randomized-restart decentralized unlearning), while other clients continue regular training. We incorporate the relevant amplification analyses (network mixing, projection, subsampling)~\cite{DBLP:conf/aistats/CyffersB22,DBLP:conf/nips/BalleBG18,DBLP:conf/focs/FeldmanMTT18}, show why DDP is not an ideal unlearning certifier due to noise scaling with the forget-set size \(m\), and prove that \methodname\ attains stronger guarantees with substantially less noise (not scaling with \(m\)), together with unlearning–utility and deletion-capacity analyses for both settings; we also provide convergence guarantees for strongly convex, convex, and smooth nonconvex losses.

We support our claims with experiments on \emph{two real-world datasets} and \emph{two model architectures}, showing that we drive forget accuracy down to the retraining baseline (near random guessing) while maintaining higher retained accuracy—demonstrating that \methodname\ is practical with low communication and storage overhead.

Our \textbf{\underline{contributions}} can be summarized as follows:
\begin{itemize}
  \item \textit{Formulation \& capacity (first to our knowledge).} We introduce a \emph{view-based} formulation of certified decentralized unlearning on fixed graphs and adapt deletion capacity to decentralized settings, making explicit the roles of routing probabilities, network mixing, and trust-region projections.
  \item \textit{Algorithm.} We propose \methodname: a lightweight token method that performs projected noisy ascent on the forget set \emph{only at the unlearning client}, uses randomized routing with optional per-node averaging, and requires neither storing past gradients nor involving all clients.
  \item \textit{Theory.} We derive $(\varepsilon,\delta)$ decentralized certificates on client \emph{views} via RDP with subsampling and network amplification; prove last-iterate/stationarity guarantees for strongly convex, convex, and smooth nonconvex losses; and give a two-regime deletion-capacity characterization that separates optimization/variance from an alignment-bias term—clarifying when \methodname\ outperforms DDP. This also yields a clean separation between DDP-as-certifier (group-privacy scales with the number of deletions) and approximate certified unlearning, for which \methodname’s noise does not scale with the forget-set size.
  \item \textit{Empirics \& code.} Under matched privacy budgets, \methodname\ attains near-random forget accuracy and better retained utility than DDP and fine-tuning baselines; we will release code to facilitate reproduction and extension.
\end{itemize}

\section{Related Work}
\paragraph{Private Decentralized Learning}
Motivated by privacy and scalability concerns~\cite{inverting,DBLP:journals/corr/ShokriSS16}, a large body of work studies \emph{fully decentralized} FL algorithms that avoid a central coordinator and rely instead on peer-to-peer exchanges along edges of a network graph—either under dynamic topologies~\cite{Lian2017b,Lian2018,koloskova2020unified} or fixed topologies~\cite{neglia2020}. Decentralized stochastic gradient methods (often called incremental/gossip methods) operate via gossip protocols~\cite{random_gossip,DBLP:conf/icml/ColinBSC16}. To satisfy privacy requirements, Differential Privacy (DP)—originally developed in the centralized model (DP-SGD~\cite{DBLP:conf/ccs/AbadiCGMMT016})—has been adapted to decentralized settings. In particular, \citet{DBLP:conf/aistats/CyffersB22} introduced \emph{network differential privacy} to relax the stronger local DP model~\cite{Kasiviswanathan2008,d13}, leveraging privacy amplification by subsampling~\cite{Balle_subsampling}, by shuffling~\cite{Balle2019,amp_shuffling,Balle2019b}, and by iteration/contractive maps (e.g., Euclidean projection)~\cite{DBLP:conf/focs/FeldmanMTT18}, together with Rényi DP~\cite{RDP}, to obtain \emph{amplification by decentralization} on fixed topologies. Recent work further tightens convergence and privacy via matrix factorization views~\cite{bellet2025unifiedprivacyguaranteesdecentralized}, correlated noise~\cite{DBLP:conf/icml/AllouahKFJG24}, and synthetic-data amplification~\cite{DBLP:journals/corr/abs-2506-05101}. While DP is a strong notion that implies certified unlearning (CU) through group privacy and composition~\cite{DBLP:books/sp/17/Vadhan17}, our focus here is to adopt the perspective of~\cite{DBLP:conf/aistats/CyffersB22} and show \emph{why decentralized DP (DDP) is not an ideal certifier for unlearning}, despite being theoretically valid. We make a clean separation between DDP and decentralized certified unlearning (DCU).
\vspace{-4pt}
\paragraph{Certified Unlearning}
There is a growing literature on CU in centralized settings: many results target convex objectives and linear models~\cite{DBLP:conf/nips/SekhariAKS21,DBLP:conf/icml/GuoGHM20,DBLP:conf/nips/GinartGVZ19,DBLP:journals/corr/abs-2502-17323}, while others extend to non-convex tasks~\cite{DBLP:conf/icml/ChourasiaS23,DBLP:conf/nips/ChienWCL24a,DBLP:conf/iclr/AllouahKGK25,DBLP:journals/corr/abs-2409-09778} and neural networks~\cite{DBLP:journals/corr/abs-2506-06985,DBLP:conf/icml/ZhangDWL24}. In fully decentralized FL, the only work studying CU under random topologies and gossip is~\cite{qiao2025pdudt}, which uses the Gaussian mechanism but involves all users (or all neighbors) and stores past gradients—incurring significant storage/compute overhead and a temporary drop in test accuracy during unlearning. Moreover, it imports CU definitions from the centralized setting and certifies \emph{per client}, without extending guarantees to network-level \emph{views}. In contrast, we add noise \emph{only at the unlearning client} and perform \emph{noisy projected gradient ascent} there, while others continue standard training; randomized next-hop selection and per-hop step counts act as \emph{post-processing}, amplifying privacy and yielding tighter bounds without relying on DDP. Our gradient-alignment design makes the required noise \emph{independent of the forget-set size}, improving the privacy–utility trade-off. Because the protocols differ fundamentally (dynamic topologies and gossip in~\cite{qiao2025pdudt} vs.\ a \emph{random walk} on fixed graphs here), we do not compare directly; head-to-head comparisons are misleading in this scenario due to the fundamental differences in settings.

\section{Problem Statement}
\label{sec:problem_statement}

Let $\mathcal{G}=(\mathcal{V},\mathcal{E})$ be a fully connected, undirected graph with  $\mathcal{V}=\{1,\dots,N\}$ users; an edge $(u,v)\in\mathcal{E}$ means $u$ can communicate with $v$. 
Each user $u$ holds a private dataset $D_u$ of size $n_u:=|D_u|$. We denote the global dataset on the graph by $D:=\bigcup_{u\in\mathcal{V}} D_u$ with total size $n:=|D|=\sum_{u} n_u$, which is independently drawn from a distribution~$\mathcal{P}$ over the data
space~$\mathcal{Z}$.
Users want to collaboratively learn a shared model represented by a parameter vector $\theta \in \Theta \subseteq \R^d$.
Given a loss function $\ell: \R^d \times  \mathcal{Z} \rightarrow \R_+$, the goal is to minimize the population risk:
\begin{equation}
    \mathcal{L}( \theta ) :=\E_{z\sim \mathcal{P}}[\,\ell(\theta,z)\,].
\end{equation}

However, during the \textit{training} phase, we reduce the problem to empirical risk minimization:
\begin{equation}
\label{eq:objective}
\min_{\theta\in\Theta} \mathcal{L}(\theta,\mathcal{V})
:= \frac{1}{|\mathcal{V}|} \sum_{u\in\mathcal{V}} \ell_u(\theta),
\end{equation}
where $\ell_u$ is the local objective function of user $u$, defined as
\begin{equation}
\label{eq:local_objectifs}
\ell_u(\theta):=\frac{1}{n_u}\sum_{z\in D_u}\ell(\theta;z).
\end{equation}

We denote by $\mathcal{A}$ our training algorithm. It is basically implemented as a single \emph{token} carrying the current model $\theta$ that performs a random walk on $\mathcal{G}$ for $T$ rounds. When user $u$ receives the token, it runs local SGD steps on $D_u$ and forwards the token (with updated $\theta$) to a uniformly random neighbor. Each user updates only upon token arrival (see Algorithm~\textbf{1} in Appendix~\textbf{B.1}). 

Moving to the \textit{unlearning} scenario: at some round $t \in \{1,\dots,T\}$, user $u$ receives a deletion request for a subset $D_f\subseteq D_u$ with $m:=|D_f|$.
Retraining from scratch on $D\setminus D_f$ is the natural approach, but it is often impractical; instead we apply an \textit{unlearning algorithm} $\mathcal{U}$ to transform the output of the original algorithm, $\mathcal{A}(D)$, into an \emph{unlearned} model that is distributionally close to the output of a suitable certifier that has no access to $D_f$. This is known as \textit{certified unlearning}; formally defined as follows.
\begin{definition}[$(\varepsilon,\delta)$-Certified Unlearning]
\label{def:approx_unlearning}
Let $D$ be a dataset of size $n$ drawn from a distribution $\mathcal{P}$, and let $D_f\subseteq D$ be a delete set with $|D_f|\le m$.
Let $\mathcal{A}$ be a learning algorithm that outputs $\mathcal{A}(D)\in\Theta$, and let $\mathcal{U}$ be an unlearning algorithm that, given a delete (forget) set $D_f$, a model, and data statistics $T(D)$, outputs $\mathcal{U}(D_f, \mathcal{A}(D), T(D))\in\Theta$.
We say that $(\mathcal{A},\mathcal{U})$ is \emph{$(\varepsilon,\delta)$-unlearning} if there exists a (possibly problem-dependent) \emph{certifying algorithm} $\mathcal{C}$ such that for all measurable sets $\theta \subseteq \Theta$:
\[
\begin{aligned}
& \Prob\!\big[ \mathcal{U}(D_f, \mathcal{A}(D), T(D)) \in \theta \big] \le e^{\varepsilon}\,\Prob\!\big[\mathcal{C}(D\setminus D_f) \in \theta \big] + \delta,\\[-0.25ex]
& \Prob\!\big[\mathcal{C}(D\setminus D_f) \in \theta \big] \le e^{\varepsilon}\,\Prob\!\big[ \mathcal{U}(D_f, \mathcal{A}(D), T(D)) \in \theta \big] + \delta.
\end{aligned}
\]
Thus, the output distribution after unlearning is $(\varepsilon,\delta)$-indistinguishable from that of a certifying procedure that has no access to the forget set $D_f$. 
Typical choices include $\mathcal{C}(D\setminus D_f)= \mathcal{A}(D\setminus D_f)$~\citet{DBLP:conf/nips/GinartGVZ19, DBLP:conf/icml/GuoGHM20} or 
$\mathcal{C}(D\setminus D_f)= \mathcal{U}(\varnothing, \mathcal{A}(D\setminus D_f), T(D\setminus D_f))$~\citet{DBLP:conf/nips/SekhariAKS21, DBLP:conf/iclr/AllouahKGK25}. In this work, we adopt the second choice, noting that this is just a theoretical choice and does not affect the results in any way.
\end{definition}
\vspace{-2pt}
\paragraph{Decentralized Formulation.}
We adopt decentralized differential privacy (DDP) of \citet{DBLP:conf/aistats/CyffersB22}. A decentralized algorithm $\mathcal{A}$ on a graph produces a transcript $\mathcal{A}(D)$ of all exchanged messages. No user sees the full transcript: user $u$ only observes a \emph{view}
\begin{equation}
\label{eq:view}
O_u(\mathcal{A}(D)) \;=\; \{\, (v,m,v') \in \mathcal{A}(D)\;:\; v=u \ \text{or}\ v'=u \,\}.
\end{equation}
For each $v$, let $\Theta_v := \mathrm{Range}(O_v)$ denote the observation space of $v$'s views.

\begin{definition}[Network Differential Privacy~\cite{DBLP:conf/aistats/CyffersB22}]
\label{def:network_dp}
An algorithm $\mathcal{A}$ satisfies $(\varepsilon,\delta)$-network DP if for all distinct $u,v\in\mathcal{V}$, all neighboring datasets $D \sim_u D'$ (differing only in user $u$'s data), and all $\theta \subseteq \Theta_v$,
\begin{equation}
\label{eq:network_dp}
\Prob\!\big[\,O_v(\mathcal{A}(D)) \in \theta\,\big]
\;\le\;
e^{\varepsilon}\,\Prob\!\big[\,O_v(\mathcal{A}(D')) \in \theta\,\big] + \delta.
\end{equation}
\end{definition}

We adapt certified unlearning to this view-based setting:

\begin{definition}[$(\varepsilon,\delta)$-Decentralized Certified Unlearning]
\label{def:decentralized_unlearning}
Let $\mathcal{A}$ produce $\mathcal{A}(D)$ and let $\mathcal{U}$ produce $\mathcal{U}(D_f,\mathcal{A}(D))$. We say $(\mathcal{A},\mathcal{U})$ achieves $(\varepsilon,\delta)$ \emph{decentralized certified unlearning} if there exists a certifier $\mathcal{C}$ with transcript $\mathcal{C}(D\setminus D_f)$ such that for any deletion request by user $u$ (i.e., $D_f\subseteq D_u$), any $v\neq u$, and all $\theta \subseteq \Theta_v$,
\begin{equation}
\Prob\!\big[\,O_v(\mathcal{U}(D_f,\mathcal{A}(D))) \in \theta\,\big]
\;\le\;
e^{\varepsilon}\,\Prob\!\big[\,O_v(\mathcal{C}(D\setminus D_f)) \in \theta\,\big] + \delta,
\end{equation}
and \emph{symmetrically} with $\mathcal{U}$ and $\mathcal{C}$ swapped.
\end{definition}

To quantify the trade-off between the statistical guarantee of \Cref{def:approx_unlearning} and the resulting model's utility, we adopt the \emph{Deletion Capacity} metric from \citet{DBLP:conf/nips/SekhariAKS21}. \\

\begin{definition}[Deletion capacity]
\label{def:deletion_capacity}
Let $\varepsilon,\delta\ge 0$. Let $D\sim \mathcal{P}^n$ be drawn i.i.d.\ from a distribution $\mathcal{P}$, and let $\ell(\theta,z)$ be a loss. Define the population risk $\LF( \theta )=\E_{z\sim \mathcal{P}}[\,\ell(\theta,z)\,]$ and $\LF^\ast=\min_{\theta \in \Theta }\LF(\theta)$. For a pair $(\mathcal{A}, \mathcal{U})$ that is $(\varepsilon,\delta)$-unlearning (per \Cref{def:approx_unlearning} or \Cref{def:decentralized_unlearning}), the \emph{deletion capacity}
$m^{\mathcal{A}, \mathcal{U}}_{\varepsilon,\delta}(d,N)$ is the largest integer $m$ such that
\[
\E\!\left[\;\max_{D_f\subseteq D:\,|D_f|\le m}\; \big(\LF(\mathcal{U}(D_f, \mathcal{A}(D), T(D))) - \LF^\ast\big)\;\right]\;\le\; \gamma,
\]
where the expectation is taken over $D\sim \mathcal{P}^n$ and over the internal randomness of $\mathcal{A}$ and $\mathcal{U}$ (and any randomness in $T$).
\end{definition}

\paragraph{Unlearning via Differential Privacy.} The view-based definitions let us certify unlearning by privacy: Differential Privacy (DP) implies global certified unlearning (\Cref{def:approx_unlearning}) with certifier $\mathcal{C}(D\!\setminus\!D_f)=\mathcal{A}(D\!\setminus\!D_f)$, and Network-DP (\Cref{def:network_dp}) implies decentralized certified unlearning on \emph{views} (\Cref{def:decentralized_unlearning}). Thus any network-private token algorithm (see Algorithm~\textbf{2} in Appendix~\textbf{B.2}) can serve as its own certifier by simply running the private protocol: $\mathcal{U}(D_f,\mathcal{A}(D))=\mathcal{A}(D)$ and comparing to $\mathcal{C}(D\!\setminus\!D_f)=\mathcal{A}(D\!\setminus\!D_f)$. We now quantify the deletion capacity (\Cref{def:deletion_capacity}) achievable under this decentralized-DP baseline.

\begin{theorem}[Deletion capacity of decentralized DP]
\label{thm:dcap-cb}
Let $\Theta\subset\R^{d}$ be convex with diameter $R:=\sup_{\theta,\theta'\in\Theta}\|\theta-\theta'\|_{2}$, and assume the loss $\ell$ is $L$-smooth and convex. Consider the network-private SGD (Appendix~\textbf{B.2}), run for $T$ hops with $N$ clients, and take $\mathcal{U}(D_f,\mathcal{A}(D),T(D))=\mathcal{A}(D)$. Then, for target $(\varepsilon,\delta)$ at edit distance $m$, the deletion capacity satisfies
\begin{align}
\label{eq:ddp-capacity}
m^{\mathcal{A},\mathcal{U}}_{\varepsilon,\delta}(d,N)
\;=\;
\widetilde{\Omega}\!\left(
\frac{\varepsilon}{R\,L\,(2+\log T)}\;
\sqrt{\frac{N}{d\,\log(1/\delta)\,\log N}}
\right).
\end{align}
\end{theorem}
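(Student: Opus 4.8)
The plan is to obtain the capacity by balancing the privacy cost of deleting $m$ records against the utility budget $\gamma$, using three ingredients already in place: that Network-DP implies decentralized certified unlearning on views (\Cref{def:decentralized_unlearning}), group privacy, and a utility bound for noisy projected SGD. Since here $\mathcal{U}(D_f,\mathcal{A}(D),T(D))=\mathcal{A}(D)$ and the certifier is $\mathcal{C}(D\setminus D_f)=\mathcal{A}(D\setminus D_f)$, and these two inputs differ in exactly the $m$ deleted records, the requirement of \Cref{def:decentralized_unlearning} at edit distance $m$ is precisely a group-privacy statement: the token protocol must be $(\varepsilon,\delta)$-Network-DP under changing $m$ records of user $u$. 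By the standard group-privacy rule this reduces to per-record Network-DP with budget $\varepsilon_{\mathrm{rec}}\asymp\varepsilon/m$ (with the corresponding inflation of $\delta$), so certifying the deletion of $m$ points costs a factor $m$ in the single-record budget.

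Second, I would quantify the noise needed for per-record $(\varepsilon_{\mathrm{rec}},\delta)$-Network-DP. On the complete graph the token is a random walk, each local step subsamples $D_u$ and applies a projected (hence contractive) Gaussian-perturbed update, so I can combine R\'enyi-DP composition over the $T$ hops with amplification by subsampling and the amplification-by-decentralization bound of \citet{DBLP:conf/aistats/CyffersB22}. The Gaussian mechanism sensitivity is the gradient bound $G$, which here equals $LR$: since $\ell$ is $L$-smooth and convex and the iterates are projected into $\Theta$ (diameter $R$), $\norm{\nabla\ell(\theta)}\le L\norm{\theta-\theta^\ast}\le LR$ along the trajectory. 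Inverting the amplified RDP-to-$(\varepsilon,\delta)$ bound gives a required noise scale of the form $\sigma\gtrsim \frac{LR}{\varepsilon_{\mathrm{rec}}}(2+\log T)\sqrt{\tfrac{\log(1/\delta)\,\log N}{N}}$; substituting $\varepsilon_{\mathrm{rec}}=\varepsilon/m$ shows the DDP noise grows \emph{linearly in $m$}, which is the structural obstruction this baseline suffers from.

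Third, I would cap $\sigma$ through utility. For an $L$-smooth convex loss the excess population risk of the averaged noisy-SGD iterate decomposes into an optimization/generalization part that vanishes with $T$ and $n$, and a noise-variance part scaling like $\sigma\sqrt{d}$ (the injected Gaussian perturbation across $d$ coordinates). Requiring the total to stay below the fixed tolerance $\gamma$ forces $\sigma=O(1/\sqrt{d})$ up to constants (absorbing $\gamma$ and $R$). Feeding this cap into the privacy relation from the previous step and solving the resulting inequality for the largest admissible $m=\Theta(\varepsilon/\varepsilon_{\mathrm{rec}})$ yields $m=\widetilde{\Omega}\!\big(\tfrac{\varepsilon}{RL(2+\log T)}\sqrt{N/(d\log(1/\delta)\log N)}\big)$, matching \eqref{eq:ddp-capacity}; the tilde and the placement of the logarithmic factors absorb the RDP-conversion constants.

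The main obstacle is the second step: making the amplification-by-decentralization argument precise enough to expose the exact joint dependence on $(T,N,d,\delta)$. The guarantee of \citet{DBLP:conf/aistats/CyffersB22} is stated for a specific protocol and must be re-instantiated for the subsampled, projected token walk, with the subsampling rate, per-hop local-step counts, and the expected number of token returns to the deleting node all composing correctly under RDP; in particular, recovering the $(2+\log T)$ factor (from the amplification-by-iteration sum along the walk) \emph{linearly} rather than under a square root, together with the $\sqrt{N/\log N}$ amplification, is the delicate part. A secondary check is that the utility noise term indeed scales as $\sigma\sqrt{d}$ (not $\sigma^2 d$), since this fixes the exponent of $d$ in the final capacity.
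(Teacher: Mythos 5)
Your overall strategy is the same as the paper's: reduce certified unlearning at edit distance $m$ to group privacy (per-record budget $\varepsilon/m$), calibrate the Gaussian noise via network-RDP amplification on the complete graph, cap the noise through a utility bound on projected noisy SGD, and solve for $m$. However, two pieces of your accounting do not match the paper and would not actually produce \eqref{eq:ddp-capacity} as written. First, you place the $(2+\log T)$ factor in the \emph{privacy} calibration, claiming it arises from the amplification-by-iteration sum along the walk, and you correspondingly drop any $\sqrt{T}$ from the noise scale. In the paper the view-level RDP bound (Lemma~\ref{lem:view-rdp-token}) is \emph{linear} in the number of visits $T_u\approx T/N$, so RDP composition forces $\sigma_{\mathrm{DDP}}\propto\sqrt{T}$; the $(2+\log T)$ factor instead comes from the last-iterate utility bound $\E[\LF(\theta_T)-\LF(\theta^\ast)]\le 2R\,G\,(2+\log T)/\sqrt{T}$ (Corollary~\ref{cor:utility-logT}), and the final $T$-independence (up to the log) is the cancellation of the $\sqrt{T}$ in $\sigma_{\mathrm{DDP}}$ against the $1/\sqrt{T}$ in that utility bound. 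Your step 3, which imposes only $\sigma\sqrt{d}\lesssim\gamma$ with no $1/\sqrt{T}$ decay, combined with a $\sigma$ that carries no $\sqrt{T}$, gets the right-looking answer only because two compensating omissions happen to align; if you instead carry the correct $\sigma\propto\sqrt{T}$ through your utility cap you would lose a spurious $\sqrt{T}$ in the capacity.

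Second, you take the Gaussian sensitivity to be $G=LR$ via $\norm{\nabla\ell(\theta)}\le L\norm{\theta-\theta^\ast}\le LR$. The paper's convention (stated in Section~4.2 and used throughout Appendix~C) is that $L$ itself bounds the per-example gradient norm, so the sensitivity is $L$, not $LR$; with your choice the denominator of the final bound would read $R^2L(2+\log T)$ rather than $RL(2+\log T)$, which contradicts the capacity you then state. Your "secondary check" that the noise enters the excess risk as $\sigma\sqrt{d}$ rather than $\sigma^2 d$ is correct and consistent with the paper's $G^2=L^2+d\sigma^2$ in the privacy-dominated regime. So the architecture of the argument is right, but the $T$-bookkeeping between the privacy and utility sides, and the sensitivity convention, both need to be fixed before the derivation closes.
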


\paragraph{Proof sketch.}
We combine utility, decentralized view calibration, and group privacy, then solve for $m$. The full proof is deferred to Appendix~\textbf{C.1}.
\vspace{-2pt}
\paragraph{Decentralization Effect on Deletion Capacity.}
For the DDP baseline~\cite{DBLP:conf/aistats/CyffersB22}, ~\Cref{thm:dcap-cb} shows that the view-based calibration yields the lower bound in \eqref{eq:ddp-capacity}. In words, capacity \emph{increases} with the number of users $N$ and \emph{decreases} with the model dimension $d$, failure probability $\delta$, and the optimization horizon via the $(2+\log T)$ factor. Operationally, $T$ tracks the total number of stochastic updates observed across the network (each user contributes about $T/N$ in expectation). By contrast, in centralized DP, classical analyses give analogous deletion-capacity guarantees where the key driver is the total dataset size $n$ rather than the number of nodes $N$ \cite{DBLP:conf/nips/SekhariAKS21,DBLP:journals/corr/huang_cannone,DBLP:conf/iclr/AllouahKGK25}. Full details are deferred to Appendix~\textbf{C.2}.

If each random-walk message aggregates $s\!\ge\!1$ independent unbiased gradients with independent Gaussian noise terms \emph{before} any observer’s first view, the effective variance becomes $G^2=L^2+d\sigma^2/s$, which improves capacity by a factor $\sqrt{s}$. The corresponding scaling is
\begin{align}
m^{\mathcal{A},\mathcal{U}}_{\varepsilon,\delta}(d,N)
\;=\;
\widetilde{\Omega}\!\left(
\frac{\varepsilon}{R\,L\,(2+\log T)}\,
\sqrt{\frac{s\,N}{d\,\log(1/\delta)\,\log N}}
\right).
\end{align}
Two special cases recover common protocols: (i) \emph{one-update-per-hop} random walk ($s{=}1$) reduces to \eqref{eq:ddp-capacity}; (ii) synchronous rounds that average over all users ($s{\approx}N$) yield an extra $\sqrt{N}$ gain relative to one-update-per-hop.

\section{Algorithm Design}
\label{sec:alg-design}

\subsection{Motivation and Problem Setup}
As discussed in \Cref{sec:problem_statement}, Decentralized Differential Privacy (DDP)~\cite{DBLP:conf/aistats/CyffersB22} can deliver decentralized certified unlearning (DCU), but it typically injects noise at \emph{every} step, which degrades utility. In our random-walk setting, the forget set $D_f\!\subseteq\!D_u$ affects the computation only when the walker is at the unlearning user $u$; updates performed at other users are independent of $D_f$ and thus constitute \emph{post-processing} with respect to $D_f$. This observation motivates our design: add Gaussian noise \emph{only at $u$}, keep other users noiseless, and exploit network mixing so that any given observer sees only a limited fraction of sensitive events. The result is a DCU mechanism with markedly smaller effective variance than network-wide DDP under the same $(\varepsilon,\delta)$ view-privacy budget.

\subsection{Mathematical Foundation: Gradient Alignment}
We work with the user-averaged empirical risk
\begin{align}
\mathcal{L}(\theta,\mathcal{V})
:= \frac{1}{N}\sum_{v\in\mathcal{V}} \ell_v(\theta),
\quad
\ell_v(\theta)
:= \frac{1}{n_v}\sum_{z\in D_v} \ell(\theta;z).
\end{align}
After deleting $D_f\!\subseteq\!D_u$ of size $m$, the retraining objective becomes
\begin{equation}
\begin{aligned}
\mathcal{L}_{\setminus f}(\theta,\mathcal{V})
&:= \frac{1}{N}\!\left(\ell_{u\setminus f}(\theta)+\!\!\sum_{v\neq u}\!\ell_v(\theta)\right),\\
\ell_{u\setminus f}(\theta)
&:= \frac{1}{n_u-m}\!\!\sum_{z\in D_u\setminus D_f}\!\!\ell(\theta;z).
\end{aligned}
\end{equation}

Let $\ell_f(\theta):=\frac{1}{m}\sum_{z\in D_f}\ell(\theta;z)$ and assume $L$-smoothness ($\|\nabla\ell(\theta;z)\|_2\le L$). A direct calculation gives
\begin{equation}
\begin{aligned}
\nabla\mathcal{L}_{\setminus f}(\theta,\mathcal{V})
&= \nabla\mathcal{L}(\theta,\mathcal{V})
- \frac{1}{N}\Big[\nabla\ell_u(\theta)-\nabla\ell_{u\setminus f}(\theta)\Big]
\\
&\quad + \Delta_{\mathrm{norm}}(\theta).
\end{aligned}
\end{equation}

where $\|\Delta_{\mathrm{norm}}(\theta)\|_2 = O\!\left(L\,\frac{m}{n_u}\right).$
Thus, moving toward the retrained optimum can be realized by standard descent on $v\!\neq\!u$ plus a \emph{corrective step at $u$} that depends only on local information.

\paragraph{Decentralized Realization via Random Walk}
We implement \methodname~as Algorithm~\ref{alg:rrdu}: the walker visits $u$ with probability $p$ and otherwise a uniformly sampled node in $\mathcal{V}\!\setminus\!\{u\}$; non-$u$ nodes may average $s\!\ge\!1$ mini-batches. Noise is injected \emph{only} at $u$.

\begin{algorithm}[t]
\caption{\methodname: Randomized-Restart Decentralized Unlearning}
\label{alg:rrdu}
\begin{algorithmic}[1]
\Require Initial $\theta_0$; unlearning user $u$; forget set $D_f\subseteq D_u$; routing prob.\ $p$; stepsizes $\{\eta_t\}$; noise scale $\sigma$; feasible set $\Theta$; trust ball $\mathbb{B}(\theta_{\mathrm{ref}},\varrho)$; horizon $T_u$; local-averaging $s\!\ge\!1$; \textbf{mode} $\in\{\textsc{exact},\textsc{lightweight}\}$
\State $\theta \leftarrow \theta_0$
\For{$t=1$ \textbf{to} $T_u$}
  \State \textbf{Route}: with prob.\ $p$ move to $u$; else to $v\sim\mathrm{Uniform}(\mathcal{V}\setminus\{u\})$
  \If{current node is $u$} \Comment{\emph{Noisy} corrective step at $u$}
     \State draw $Z_t \sim \mathcal{N}(0,\sigma^2 I_d)$
     \If{\textbf{mode} $=$ \textsc{exact}}
        \State $g_u \leftarrow -\,\nabla\ell_{u\setminus f}(\theta)$
     \Else \Comment{\textsc{lightweight}}
        \State $g_u \leftarrow \frac{m}{n_u}\,\nabla\ell_{f}(\theta)$
     \EndIf
     \State $\theta \leftarrow \Pi_{\mathbb{B}(\theta_{\mathrm{ref}},\varrho)}\!\big(\theta + \eta_t\,(g_u + Z_t)\big)$
  \Else \Comment{\emph{Noiseless} PGD on $D_v$}
     \State draw $B_v^{(1)},\dots,B_v^{(s)}\subseteq D_v$ i.i.d.
     \State $g_v \leftarrow \frac{1}{s}\sum_{i=1}^{s}\nabla\ell(\theta; B_v^{(i)})$
     \State $\theta \leftarrow \Pi_{\Theta}\!\big(\theta - \eta_t\,g_v\big)$
  \EndIf
\EndFor
\State \textbf{return} $\theta$
\end{algorithmic}
\end{algorithm}

Choosing $p\!=\!1/N$ matches the natural visit rate of a uniform random walk and yields clean alignment weights; the trust-region projection $\Pi_{\mathbb{B}(\theta_{\mathrm{ref}},\varrho)}$ stabilizes the noisy ascent at $u$.
\vspace{-4pt}
\paragraph{Exact vs.\ Lightweight Alignment at \texorpdfstring{$u$}{u}}
\label{subsec:exact-vs-light}
When the walker is at $u$, we use either \emph{exact alignment} ($-\,\nabla\ell_{u\setminus f}(\theta)$) or \emph{lightweight alignment} $\big(\tfrac{m}{n_u}\nabla\ell_f(\theta)\big)$. Both fit the view-based privacy accounting; the lightweight choice trades a controlled alignment bias for lower compute/memory.

\subsection{Synthesis of Advantages}
Let $g_{\neg u}(\theta):=\frac{1}{N-1}\sum_{v\neq u}\nabla\ell_v(\theta)$. The conditional expected update satisfies
\begin{align}
\frac{\E[\Delta\theta_t \mid \theta_t]}{\eta_t}
= -(1-p)\,g_{\neg u}(\theta_t) + p\,g_u(\theta_t).
\end{align}
With $p=\tfrac{1}{N}$ and the \emph{exact-alignment} choice $g_u(\theta)=-\nabla\ell_{u\setminus f}(\theta)$, we recover the retraining direction:
\begin{align}
\frac{\E[\Delta\theta_t \mid \theta_t]}{\eta_t}
&= -\frac{1}{N}\!\left(\sum_{v\neq u}\nabla\ell_v(\theta_t)+\nabla\ell_{u\setminus f}(\theta_t)\right) \notag\\
&= -\,\nabla\mathcal{L}_{\setminus f}(\theta_t,\mathcal{V}).
\end{align}

With the \emph{lightweight} choice $g_u(\theta)=\tfrac{m}{n_u}\nabla\ell_f(\theta)$, the alignment error is controlled:
\begin{align}
\Big\|\tfrac{\E[\Delta\theta_t \mid \theta_t]}{\eta_t} + \nabla\mathcal{L}_{\setminus f}(\theta_t,\mathcal{V})\Big\|_2
= O\!\left(L\,\tfrac{m}{n_u}\right).
\end{align}

\paragraph{Effective Variance: Concentrated Noise and Local Averaging}
\label{subsec:effective-variance}
Only a fraction $p$ of hops add Gaussian noise, so the per-hop second moment obeys
\begin{align}
\overline{G}^2
:= \frac{1}{T_u}\sum_{t=1}^{T_u}\E\!\big[\|g_t\|_2^2\big]
\;\le\; L^2 \;+\; \frac{p}{s}\,d\,\sigma^2,
\end{align}
where local averaging $s$ reduces stochastic variance at non-$u$ users by $\approx 1/s$. In contrast, network-wide DDP adds $d\,\sigma^2$ on every hop.

\paragraph{Geometric mixing and stability.}
For any observer $v\!\neq\!u$, the first-observation delay of a sensitive update is $\mathrm{Geom}\!\left(q\right)$ with $q \;=\; \frac{1-p}{N-1}$,
since the walker must leave $u$ (w.p\ $1-p$) and land at $v$ (w.p\ $1/(N-1)$). This mixing underpins the view-level privacy amplification used in \Cref{sec:theory}. Moreover, trust-region projection keeps noisy steps controlled:
\begin{align}
\|\theta_{t+1}-\theta_t\|_2
\;\le\; \eta_t\,\big\|g_t + Z_t\,\mathbf{1}_{\{\text{at }u\}}\big\|_2,
\end{align}
ensuring the iterate stays within $\mathbb{B}(\theta_{\mathrm{ref}},\varrho)$ while $\Pi_{\Theta}$ maintains feasibility elsewhere.

\begin{table*}[t]

\centering
\caption{\textbf{Utility bounds across objective classes (RR-DU).} Optimization and privacy terms shown separately; combine additively. Measure indicates the quantity being bounded.}
\vspace{4pt}
\footnotesize
\begin{tabular}{lccc}
\toprule
\textbf{Objective class} 
& \textbf{Optimization term} 
& \textbf{Privacy term (using Cor.~\ref{cor:noise-calib})} 
& \textbf{Measure}\\
\midrule
Convex (bounded domain) 
& $\tilde{O}\!\Big(\tfrac{R_{\rm cert}\,L}{\sqrt{s\,T_u}}\Big)$
& $\tilde{O}\!\Big(
R_{\rm cert}\,\tfrac{L}{\varepsilon}\,p\,
\sqrt{\tfrac{d\,\ln(1/\delta)\,\ln N}{s\,N}}
\Big)$
& $\LF(\theta_{T_u})-\LF^\star$\\[6pt]
$\mu$-Strongly convex 
& $\tilde{O}\!\Big(\tfrac{L^2}{\mu\,s\,T_u}\Big)$
& $\tilde{O}\!\Big(
\tfrac{L^2}{\mu}\,\tfrac{1}{\varepsilon}\,p\,
\sqrt{\tfrac{d\,\ln(1/\delta)\,\ln N}{s\,N}}
\Big)$
& $\LF(\theta_{T_u})-\LF^\star$\\[6pt]
Smooth nonconvex 
& $\tilde{O}\!\Big(\tfrac{L^2}{\sqrt{s\,T_u}}\Big)$
& $\tilde{O}\!\Big(
\tfrac{L^2}{\varepsilon}\,p\,
\sqrt{\tfrac{d\,\ln(1/\delta)\,\ln N}{s\,N}}
\Big)$
& $\tfrac{1}{T_u}\sum_{t=1}^{T_u}\|\nabla \LF(\theta_t)\|^2$\\
\bottomrule
\end{tabular}
\label{table:utilit-bounds}
\end{table*}
\section{Theoretical Analysis}
\label{sec:theory}

\subsection{Assumptions and Setup}
\label{sec:assumptions}
\textit{(Informal)} We assume: (i) $L$-smooth losses and non-expansive projections (onto $\Theta$ and the trust ball $\mathbb{B}(\theta_{\mathrm{ref}},\varrho)$); (ii) \emph{localized} Gaussian noise is injected only when the random walk is at the unlearning user $u$ and touches $D_f$ (other updates are post-processing w.r.t.\ $D_f$); (iii) routing visits $u$ with probability $p$ and otherwise a uniformly sampled node in $\mathcal{V}\!\setminus\!\{u\}$; (iv) non-$u$ nodes may average $s\!\ge\!1$ i.i.d.\ mini-batches before forwarding. Let $T_u$ denote the number of unlearning rounds (hops) and $R_{\rm cert} := \mathrm{diam}\!\big(\Theta \cap \mathbb{B}(\theta_{\mathrm{ref}},\varrho)\big).$

We analyze convex, strongly convex, and smooth nonconvex objectives; \emph{formal} statements and technical conditions appear in Appendix~\textbf{C.3}.

\subsection{Privacy on Views (Network-DP) and Noise Calibration}
\begin{theorem}[$(\varepsilon,\delta)$-DCU via view-based amplification]
\label{thm:unlearning-guarantee}
Fix $p\!\in\!(0,1)$ and horizon $T_u$. In \methodname, only visits to $u$ are sensitive. Let the single-visit Gaussian mechanism at $u$ have RDP level $\varepsilon_0$ with failure $\delta_0$. Then, for any observer $v\!\neq\!u$, the composed view-privacy parameters satisfy
\begin{equation}
\begin{aligned}
\varepsilon \;&=\;
O\!\Big(
\varepsilon_0\,\sqrt{\,p\,T_u\,\tfrac{\ln N}{N}\,\ln\tfrac{1}{\delta'}\,}
\Big),
\\
\delta \;&=\;
O\!\Big(
p\,T_u\,\delta_0\,\tfrac{\ln N}{N}
\Big)+\delta'.
\end{aligned}
\end{equation}
\end{theorem}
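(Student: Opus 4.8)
The plan is to build the view-privacy guarantee in three amplification layers—localization/subsampling, network mixing, and Rényi composition—and then convert from RDP to $(\varepsilon,\delta)$. First I would invoke the localization assumption: since Gaussian noise and all dependence on $D_f$ occur only when the walker is at $u$, every update at a node $v'\neq u$ is post-processing with respect to $D_f$ and contributes nothing to the privacy loss. Because the routing sends the walker to $u$ with probability $p$ independently at each hop, the number of sensitive visits over the horizon is $K\sim\mathrm{Binomial}(T_u,p)$, concentrated around $pT_u$; this is exactly the subsampling layer, with sampling rate $p$. Each sensitive visit is a Gaussian mechanism followed by the non-expansive projection $\Pi_{\mathbb{B}(\theta_{\mathrm{ref}},\varrho)}$, so its per-visit Rényi level is $\varepsilon_0$ (at a fixed order $\alpha$) with failure $\delta_0$, and the projection does not inflate sensitivity.

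The core step is the view-level network amplification. For a fixed observer $v\neq u$, a sensitive model produced at $u$ is never revealed directly; $v$ observes it only once the token first reaches $v$, which by the mixing paragraph takes $\mathrm{Geom}(q)$ hops with $q=\tfrac{1-p}{N-1}$. I would therefore express $v$'s view as a measurable post-processing of the interleaved sequence of sensitive updates and noiseless contractive steps, and adapt the amplification-by-decentralization argument of \cite{DBLP:conf/aistats/CyffersB22} to our i.i.d.-visit routing on the complete graph. The effect is that each sensitive event enters $v$'s Rényi accountant with an extra multiplicative factor $\beta=O\!\big(\tfrac{\ln N}{N}\big)$: the $1/N$ reflects that $v$ is visited only with probability $\approx 1/N$ per hop, so the token mixes away before $v$ observes a fresh update, while the $\ln N$ comes from a high-probability bound controlling the geometric observation delays over the horizon.

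With these two layers in hand, I would compose. By linear additivity of RDP \cite{RDP} together with the subsampled-RDP machinery \cite{DBLP:conf/nips/BalleBG18,DBLP:conf/focs/FeldmanMTT18}, the total view-RDP at order $\alpha$ is
\begin{equation}
\varepsilon_{\mathrm{RDP}}(\alpha)=O\!\Big(p\,T_u\,\tfrac{\ln N}{N}\,\varepsilon_0(\alpha)\Big),
\end{equation}
i.e.\ the expected sensitive count $pT_u$ times the amplification $\beta$ times the per-visit level. Converting to $(\varepsilon,\delta')$ via $\varepsilon=\varepsilon_{\mathrm{RDP}}(\alpha)+\tfrac{\ln(1/\delta')}{\alpha-1}$ and using that the Gaussian RDP curve is linear in $\alpha$, optimizing the order balances the two terms and produces the stated square-root form $\varepsilon=O\!\big(\varepsilon_0\sqrt{p\,T_u\,\tfrac{\ln N}{N}\,\ln(1/\delta')}\big)$. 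The failure budget collects the per-visit $\delta_0$ scaled by the same effective count together with the conversion slack, giving $\delta=O\!\big(p\,T_u\,\delta_0\,\tfrac{\ln N}{N}\big)+\delta'$. Finally, since Network-DP implies decentralized certified unlearning on views (as noted just before \Cref{thm:dcap-cb}), this view-privacy bound is exactly the claimed DCU guarantee.

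The hard part will be the middle step: rigorously pinning down the $\ln N/N$ factor for our i.i.d.-visit routing rather than a neighbor-based walk. The subtlety is that the steps between a sensitive update and $v$'s first observation are \emph{noiseless}, so classical amplification-by-iteration does not apply directly; the amplification must instead be attributed to the view restriction and mixing, which requires (a) writing $v$'s observations as a post-processing of the sensitive subsequence, (b) taking a worst-case or high-probability control over the random geometric delays—where the $\ln N$ enters—and (c) fusing this with subsampling inside a single Rényi accountant without double-counting the $p$ and $1/N$ dilutions. Establishing that $\Pi_{\mathbb{B}(\theta_{\mathrm{ref}},\varrho)}$ keeps the relevant iterates in a bounded region, so sensitivities stay controlled across the delay, is the technical crux.
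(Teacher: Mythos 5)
Your proposal follows essentially the same route as the paper's Appendix C.5 argument: localize sensitivity to the visits at $u$, count them as $\mathrm{Binomial}(T_u,p)$ with Chernoff concentration, charge each visit a per-view RDP contribution of order $\alpha L^2\ln N/(\sigma^2 N)$ via the Cyffers--Bellet token-SGD amplification lemma, compose linearly in RDP, and convert to $(\varepsilon,\delta)$ by optimizing the order $\alpha$, which produces the square-root form. The ``technical crux'' you flag---that the hops between a sensitive update at $u$ and the observer's first view are \emph{noiseless} in \methodname, so amplification-by-iteration does not directly yield the $\ln N/N$ factor---is a genuine subtlety that the paper's own proof also leaves unaddressed (it imports the lemma, which is stated for fully noisy token SGD, without modification), so your awareness of it is a point in your favor rather than a gap relative to the paper.
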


\begin{table*}[t]
\centering
\caption{\textbf{DDP vs.\ \methodname\ (convex case, corrected).} DDP (group privacy) requires noise that scales linearly with \(m\); \methodname\ avoids this through gradient alignment and localized noise at the unlearning user. Here \(R:=\mathrm{diam}(\Theta)\) and \(R_{\rm cert}:=\mathrm{diam}\!\big(\Theta\cap\mathbb{B}(\theta_{\mathrm{ref}},\varrho)\big)\). Hidden constants and mild \(\log T\) factors are absorbed in \(\tilde{O}(\cdot)\)/\(\tilde{\Omega}(\cdot)\).}
\vspace{4pt}
\label{tab:ddp-vs-method}
\footnotesize
\begin{tabular}{lcccc}
\toprule
\textbf{Method} 
& \textbf{Noise scale} 
& \textbf{Optimization term} 
& \textbf{Privacy term} 
& \textbf{Deletion capacity (scaling)}\\
\midrule
\textbf{DDP (group privacy)} 
& \(\Theta\!\Big(\frac{mL}{\varepsilon}\,\sqrt{\frac{T\,\ln\!\big(\frac{1}{\delta}\big)\,\ln N}{N}}\Big)\)
& \(\tilde{O}\!\Big(\frac{RL}{\sqrt{T}}\Big)\)
& \(\tilde{O}\!\Big(R\,\frac{mL}{\varepsilon}\,\sqrt{\frac{d\,\ln\!\big(\frac{1}{\delta}\big)\,\ln N}{N}}\Big)\)
& \(\tilde{\Omega}\!\Big(\frac{\varepsilon}{RL}\,\sqrt{\frac{N}{d\,\ln\!\big(\frac{1}{\delta}\big)\,\ln N}}\Big)\)\\[6pt]
\textbf{\methodname\ (ours)} 
& \(\Theta\!\Big(\frac{L}{\varepsilon}\,\sqrt{\frac{p T_u\,\ln\!\big(\frac{1}{\delta}\big)\,\ln N}{N}}\Big)\)
& \(\tilde{O}\!\Big(\frac{R_{\rm cert}L}{\sqrt{s T_u}}\Big)\)
& \(\tilde{O}\!\Big(R_{\rm cert}\,\frac{L}{\varepsilon}\,p\,\sqrt{\frac{d\,\ln\!\Big(\frac{1}{\delta}\Big)\,\ln N}{s N}}\Big)\)
& \(\gamma>A:\ \Omega\!\Big(\frac{(\gamma-A)\,n_u}{L}\Big);\ \gamma\le A:\ 0\)\\
\bottomrule
\end{tabular}
\end{table*}

\noindent\emph{Proof sketch.}
Count sensitive visits $M_u\!\sim\!\mathrm{Binomial}(T_u,p)$; use geometric first-observation and weak convexity of $D_\alpha$ to obtain the $\sqrt{\ln N/N}$ amplification; compose in RDP and convert to $(\varepsilon,\delta)$.

\begin{corollary}[Noise calibration (scaling)]
\label{cor:noise-calib}
With $\delta'=\delta/2$ and $\delta_0=\Theta\!\big(\tfrac{\delta\,N}{p\,T_u\,\ln N}\big)$, a Gaussian scale achieving $(\varepsilon,\delta)$ on views obeys
\begin{equation}
\sigma \;=\; 
\Theta\!\left(
\frac{L}{\varepsilon}\;
\sqrt{\frac{\,p\,T_u\,\ln(1/\delta)\,\ln N\,}{\,N\,}}
\right).
\end{equation}
\end{corollary}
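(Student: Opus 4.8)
The plan is to invert the view-privacy relation of \Cref{thm:unlearning-guarantee}, treating that result as a black box that maps the per-visit Gaussian cost and the budget-split parameter $\delta'$ to the composed $(\varepsilon,\delta)$ seen by any observer $v\neq u$. Concretely, I would (i) express the single-visit privacy level $\varepsilon_0$ of the noisy corrective step in terms of $\sigma$ and the gradient sensitivity $\Theta(L)$, (ii) pin down the $\delta$ budget through $\delta'$ and $\delta_0$, and (iii) solve the resulting $\varepsilon$-equation for $\sigma$.

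For step (i) I would calibrate the per-visit cost in Rényi DP rather than through the classical $(\varepsilon_0,\delta_0)$-Gaussian bound. Under the bounded-gradient assumption the corrective update $\theta\!\leftarrow\!\Pi_{\mathbb{B}(\theta_{\mathrm{ref}},\varrho)}\!\big(\theta+\eta_t(g_u+Z_t)\big)$ has user-level gradient sensitivity $\Theta(L)$ (the non-expansive projection is post-processing and the step size $\eta_t$ cancels between signal and noise), so its RDP at order $\alpha$ is $\tfrac{\alpha L^2}{2\sigma^2}$ up to constants. Composing over the $M_u\!\sim\!\mathrm{Binomial}(T_u,p)$ sensitive visits and folding in the $\tfrac{\ln N}{N}$ first-observation amplification of \Cref{thm:unlearning-guarantee} yields a composed RDP coefficient of order $pT_u\tfrac{\ln N}{N}\tfrac{L^2}{\sigma^2}$; optimizing the Rényi order in the RDP-to-$(\varepsilon,\delta)$ conversion then gives $\varepsilon=\Theta\!\big(\tfrac{L}{\sigma}\sqrt{pT_u\tfrac{\ln N}{N}\ln(1/\delta')}\big)$, which matches the theorem's form with the per-visit level identified as $\varepsilon_0=\Theta(L/\sigma)$.

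Step (ii) splits the failure budget symmetrically: set $\delta'=\delta/2$, so $\ln(1/\delta')=\Theta(\ln(1/\delta))$, and choose $\delta_0=\Theta\!\big(\tfrac{\delta N}{pT_u\ln N}\big)$ so that the composed failure term $O\!\big(pT_u\,\delta_0\,\tfrac{\ln N}{N}\big)$ equals $\delta/2$; summing the two halves returns the target $\delta$. Step (iii) is then purely algebraic: inverting $\varepsilon=\Theta\!\big(\tfrac{L}{\sigma}\sqrt{pT_u\tfrac{\ln N}{N}\ln(1/\delta)}\big)$ for $\sigma$ produces exactly $\sigma=\Theta\!\big(\tfrac{L}{\varepsilon}\sqrt{pT_u\ln(1/\delta)\ln N/N}\big)$, the claimed scaling.

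The main obstacle is keeping the $\delta$-dependence clean, i.e.\ ensuring $\sigma$ depends on $\delta$ only through the single $\sqrt{\ln(1/\delta)}$ factor and carries no residual $\sqrt{\ln(1/\delta_0)}$ per visit. This is precisely why the per-visit calibration must use RDP: the classical Gaussian bound $\varepsilon_0=\tfrac{L}{\sigma}\sqrt{2\ln(1.25/\delta_0)}$ would, after composition, leak a $\sqrt{\ln(1/\delta_0)}$ factor into $\sigma$, whereas the RDP route makes $\varepsilon_0=\Theta(L/\sigma)$ independent of $\delta_0$ and confines $\delta_0$ to absorbing the additive failure mass in step (ii). I would therefore verify that the chosen $\delta_0$ is admissible (positive and below one, which holds whenever $pT_u\ln N/N>\delta$, the regime of interest) and that the optimizing Rényi order lies in the valid range, so that the hidden constants in the $\Theta(\cdot)$ are genuinely $\delta_0$-free.
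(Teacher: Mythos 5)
Your proposal is correct and follows essentially the same route as the paper: per-visit Rényi-DP accounting for the Gaussian corrective step (so that $\varepsilon_0=\Theta(L/\sigma)$ carries no $\delta_0$-dependence), composition over the $M_u\sim\mathrm{Binomial}(T_u,p)$ sensitive visits combined with the $\ln N/N$ view-level amplification, optimization of the Rényi order in the conversion to $(\varepsilon,\delta)$, a $\delta/2$ split with $\delta_0$ absorbing the composed failure mass, and finally inverting for $\sigma$. Your remark that a classical per-visit $(\varepsilon_0,\delta_0)$-Gaussian calibration would leak a spurious $\sqrt{\ln(1/\delta_0)}$ into $\sigma$ is a correct and worthwhile observation that the paper leaves implicit.
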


$D_\alpha$ to obtain the $\sqrt{\ln N/N}$ amplification from Network DP; compose and convert to $(\varepsilon,\delta)$. Full proofs are deferred to~\textbf{C.3}.
\subsection{Utility: Last-Iterate Excess Risk (with averaging $s$)}
Let $\LF(\theta)$ denote the retraining objective on $D\setminus D_f$. On bounded domains, projected (stochastic) first-order methods decompose last-iterate guarantees into an \emph{optimization term} plus a \emph{variance term}. In \methodname, only a fraction $p$ of hops (those at the unlearning user $u$) inject Gaussian noise, while non-$u$ hops may average $s\!\ge\!1$ mini-batches; the per-hop second moment thus satisfies
\begin{align}
G^2 \;\le\; L^2 \;+\; \frac{p}{s}\,d\,\sigma^2.
\end{align}
Plugging the calibrated noise from Cor.~\ref{cor:noise-calib} yields the bounds summarized in Table~\ref{table:utilit-bounds} (all $\tilde{O}$ hide logarithms in $T_u$). The optimization and privacy terms add. Full proofs are provided on Appendix~\textbf{C.4}.

\subsection{Alignment Bias and Deletion Capacity}
Because RR-DU performs corrective ascent on $D_f$ only at $u$, the retraining-vs-unlearning gap includes an \emph{alignment bias}
\begin{align}
\mathrm{bias}(m)\;=\;\Theta\!\Big(L\,\frac{m}{n_u}\Big),
\end{align}
arising from the $n_u\!\to\!n_u{-}m$ renormalization and imperfect first-order alignment. Putting everything together, for a target tolerance $\gamma>0$, we require
\begin{align}
&\underbrace{\tilde{O}\!\Big(\tfrac{R_{\rm cert}L}{\sqrt{s\,T_u}}\Big)
\;+\;
\tilde{O}\!\Big(
R_{\rm cert}\,\tfrac{L}{\varepsilon}\,p\,
\sqrt{\tfrac{d\,\ln(1/\delta)\,\ln N}{s\,N}}
\Big)}_{\displaystyle A\ \text{(non-bias term, independent of $m$)}}\nonumber\\
&\quad+\;
\underbrace{C\,L\,\tfrac{m}{n_u}}_{\text{alignment bias}}
\;\le\;\gamma.
\end{align}
Hence the \emph{deletion capacity} is fundamentally two-regime:
\begin{align}
m^\star \;=\;
\begin{cases}
\displaystyle \Omega\!\Big(\tfrac{(\gamma-A)\,n_u}{L}\Big), & \text{if }\gamma>A,\\[6pt]
0, & \text{otherwise}.
\end{cases}
\end{align}

\paragraph{Regime transition.}
The variance-limited regime occurs when $A \ge \gamma$, where capacity is determined by solving:
\begin{align}
\tilde{O}\!\Big(\tfrac{R_{\rm cert}L}{\sqrt{sT_u}}\Big)
+ \tilde{O}\!\Big(R_{\rm cert}\tfrac{L}{\varepsilon}p
\sqrt{\tfrac{d\ln(1/\delta)\ln N}{sN}}\Big) \;\le\; \gamma.
\end{align}
The bias-limited regime occurs when $A < \gamma$, yielding
\begin{align}
m \;=\; \Omega\!\Big(\tfrac{(\gamma - A)\,n_u}{L}\Big).
\end{align}

\textbf{Interpretation.}
Once the non-bias term \(A\) is pushed below \(\gamma\) (by increasing \(T_u\), increasing \(s\), choosing moderate \(p\), and benefiting from larger \(N\) via the \(\sqrt{\ln N/N}\) amplification), the capacity becomes \(\Theta(\gamma n_u/L)\): it is linear in the local data size \(n_u\) and no longer improves with \(N\). Any apparent \(N\)-gain seen in earlier formulas comes from the variance-limited regime where \(A\) dominates.
\vspace{-3pt}
\subsection{Comparison with DDP Baseline }
Decentralized Differential Privacy (DDP) injects noise at every hop and relies on group privacy when certifying unlearning at edit distance \(m\), which forces the DDP noise scale to grow linearly with \(m\). In contrast, \methodname\ concentrates noise at the unlearning user and uses gradient alignment so that the noise scale does not depend on \(m\); the only \(m\)-dependence enters through the alignment bias term \(O(L\,m/n_u)\). Table~\ref{tab:ddp-vs-method} summarizes the resulting scalings (convex case), separating optimization and privacy contributions to the excess-risk bound. The key takeaway is that, once the non-bias term \(A\) is below \(\gamma\), \methodname\ admits deletion capacity linear in \(n_u\), while DDP is fundamentally limited by group privacy.

\vspace{-3pt}

\paragraph{Key insight.}
DDP’s deletion capacity is constrained by group privacy—its calibrated noise grows with \(m\)—so even under favorable mixing it cannot exploit large \(n_u\). \methodname\ removes the \(m\)-dependence from the noise scale and pays for \(m\) only through a controllable alignment bias. Consequently, once \(A\) is driven below \(\gamma\), the achievable capacity scales as \(\Theta(n_u)\).
\vspace{-3pt}

\paragraph{When \methodname\ dominates.}
\methodname\ outperforms DDP precisely in the regime where the variance-driven term \(A\) is already below the target tolerance \(\gamma\). Pushing \(A\) down can be accomplished by longer horizons \(T_u\), modest local averaging \(s\), and the natural network amplification \(\sqrt{\ln N/N}\) (with \(p\approx 1/N\)), after which capacity becomes linear in \(n_u\) and independent of \(N\). For more details on the deletion capacity of \methodname~and DDP, see Appendix~\textbf{C.5}.
\vspace{-3pt}

\paragraph{Practical notes.}
Increasing \(s\) reduces variance but does not alter privacy since noise is injected only at the unlearning user. Choosing \(p\simeq 1/N\) aligns the random-walk visit rate with uniform mixing and keeps the corrective step well-weighted. For strongly convex and smooth nonconvex objectives, replace the optimization column in Table~\ref{tab:ddp-vs-method} by \(\tilde{O}(L^2/(\mu s T_u))\) and \(\tilde{O}(L^2/\sqrt{sT_u})\), respectively; the privacy and bias scalings remain unchanged.

\begin{figure*}[t!]
  \centering

  \begin{subfigure}[t]{0.49\textwidth}
    \centering
    \includegraphics[width=1.02\linewidth]{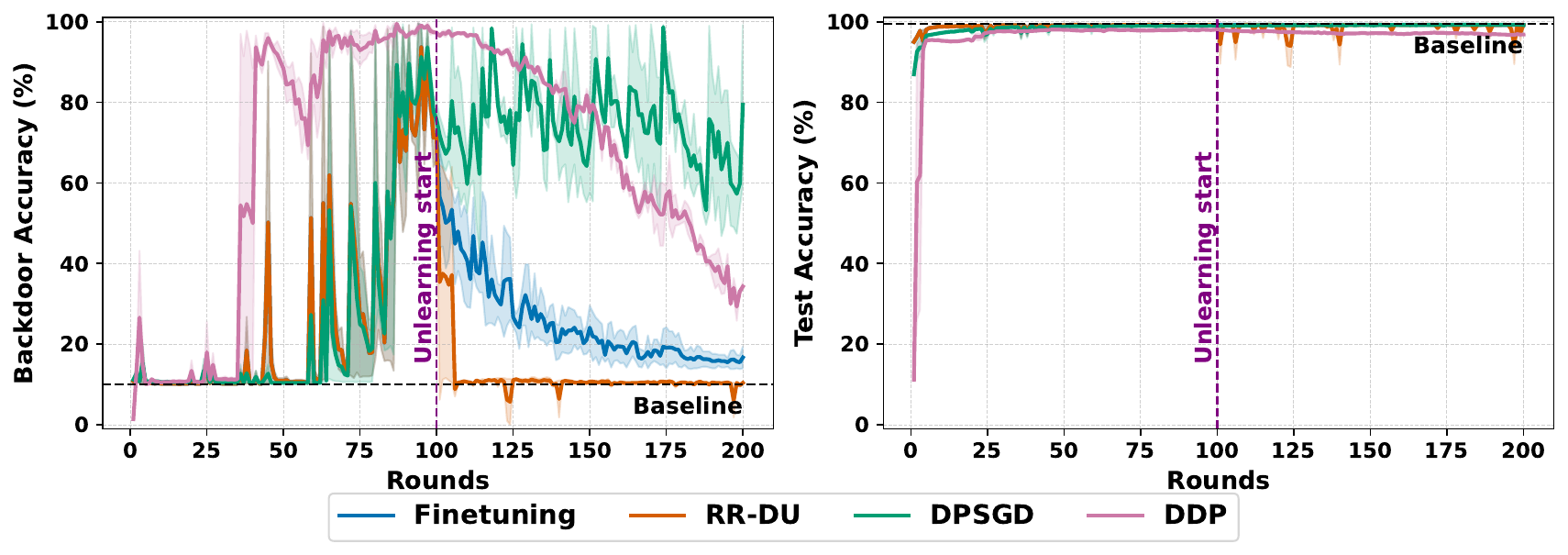}
    \caption{MNIST: backdoor and clean accuracy.}
    \label{fig:mnist_combined}
  \end{subfigure}
  \hfill
  \begin{subfigure}[t]{0.49\textwidth}
    \centering
    \includegraphics[width=1.02\linewidth]{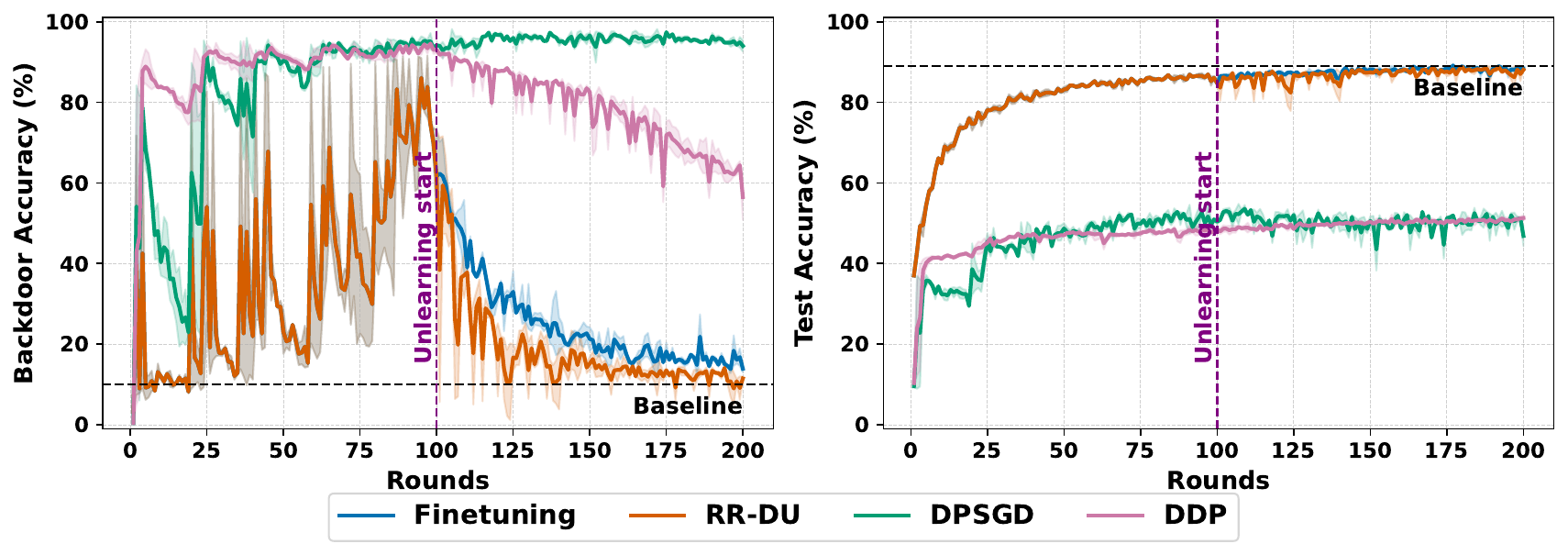}
    \caption{CIFAR-10: backdoor and clean accuracy.}
    \label{fig:cifar_combined}
  \end{subfigure}

  \caption{\textbf{Backdoor unlearning results on MNIST and CIFAR-10.}
  RR-DU vs.\ finetuning, DPSGD, and DDP. Vertical dashed lines mark unlearning
  start; horizontal dashed lines denote scratch baselines.}
  \label{fig:mnist_cifar_unlearning}
\end{figure*}

\section{Experiments}\vspace{-0.2cm}
\label{sec:experiments}
\subsection{Experimental Setup}\vspace{-0.2cm}
\label{subsec: exper_setup}
\paragraph{Dataset and Models.}
We evaluate on two standard image classification benchmarks: CIFAR-10~\citep{cifar10_100} with ResNet-18~\cite{resnet}, and MNIST~\citep{lecun1998mnist} with FLNet~\cite{DBLP:conf/aistats/McMahanMRHA17}. (See Appendix~\textbf{D} for dataset/model details.)
\vspace{-0.2cm}
\paragraph{Unlearning Scenario.}
Following prior work~\cite{halimi,manaar}, we use a backdoor (BadNets) setup~\cite{DBLP:journals/corr/abs-1708-06733} to assess unlearning effectiveness. We inject $m{=}1000$ poisoned samples into a \emph{single} target client and train for $T{=}100$ token hops, then run unlearning for $T_u{=}100$ hops. We track backdoor accuracy and clean test accuracy; the desired trade-off is to drive backdoor accuracy to $\approx 10\%$ (random-guessing baseline) while maintaining high clean accuracy. To mirror real-world conditions, poisoning is performed in a decentralized manner, which can induce fluctuations of the backdoor metric during the initial training phase; our focus is on its reduction from the unlearning round onward.
\vspace{-0.2cm}
\paragraph{Baselines.}
We compare to: (i) Decentralized DP (DDP)~\cite{DBLP:conf/aistats/CyffersB22} under the \emph{same} $(\varepsilon,\delta)$, with domain diameter $R{=}10.0$ and gradient bound $L{=}1.0$; (ii) DP-SGD~\cite{DBLP:conf/ccs/AbadiCGMMT016} with clipping $C{=}5.0$; and (iii) fine-tuning after removing the poisoned data.
\vspace{-0.2cm}
\paragraph{Network and Hyperparameters.}
We use a complete graph with $N{=}10$ clients and i.i.d.\ data split equally across clients. Unless stated otherwise, we fix $\varepsilon{=}1$ and $\delta{=}10^{-5}$. Optimization uses Adam with step size $\eta{=}0.005$, momentum $\lambda{=}0.9$, and local averaging factor $s{=}4$. The trust-region radius $\varrho$ and effective gradient bound $L$ are selected via grid search per dataset/model: $(\varrho{=}10.82,\ L{=}0.5)$ for MNIST/FLNet, and $(\varrho{=}56.30,\ L{=}0.2)$ for CIFAR-10/ResNet-18.


\subsection{Unlearning-Utility Trade-Off Evaluation}\vspace{-0.2cm}
\label{subsec: unlearning-utility}

Figure~\ref{fig:mnist_cifar_unlearning} presents the evolution of backdoor
accuracy  and clean test accuracy on MNIST
(\textbf{a}) and CIFAR-10 (\textbf{b}). The vertical purple dashed line marks
the start of unlearning, and the black dashed horizontal line denotes the
retraining-from-scratch baseline (\(\approx\!10\%\) backdoor accuracy for both
datasets, \(\approx\!99.5\%\) clean accuracy on MNIST, and
\(\approx\!89\%\) on CIFAR-10).
\vspace{-0.2cm}
\paragraph{(a) MNIST.}
Before unlearning begins (round \(\approx 100\)), all methods reach a high
backdoor success rate (\(>90\%\)), confirming that the attack is effective.
After unlearning starts, \textbf{RR-DU} rapidly drives the backdoor accuracy
down to the baseline level (\(\approx\!10\%\)), while
\textbf{finetuning} stabilizes slightly above it (\(\approx\!18\%\)).
\textbf{DPSGD} and \textbf{DDP} fail to forget the backdoor, plateauing around
\(\approx\!60\%\) and \(\approx\!35\%\), respectively.
On the clean set, RR-DU stays tightly concentrated around
\(\approx\!99.1\%-99.2\%\),
matching finetuning and outperforming DDP, which drifts downward to
\(\approx\!96.7\%\).
Overall, RR-DU achieves the closest match to the scratch baseline in both
utility and backdoor removal.
\vspace{-0.2cm}
\paragraph{(b) CIFAR-10.}
A similar pattern emerges on CIFAR-10.
Before unlearning, all methods fully learn the backdoor
(\(90\%-100\%\) ASR).
After unlearning starts, \textbf{RR-DU} again suppresses the backdoor
aggressively, converging near the baseline (\(\approx\!10\%\)).
Finetuning remains above RR-DU (\(\approx\!25\%-30\%\)),
while \textbf{DPSGD} and \textbf{DDP} retain even more backdoor signal.
On clean accuracy, RR-DU climbs steadily toward the scratch baseline
(\(\approx\!88\%-89\%\)), outperforming DPSGD and DDP, both of which saturate
around \(50\%-55\%\) and never reach baseline performance.
Finetuning recovers well but still exhibits worse backdoor removal than RR-DU. 
\vspace{-0.2cm}
\paragraph{Summary.}
Across both datasets, \methodname~consistently achieves the best trade-off: it
removes the backdoor almost as effectively as retraining from scratch while
maintaining near-optimal clean accuracy. In contrast, DP-based certifiers
(DPSGD, DDP) retain significant backdoor signal and often sacrifice clean utility.
Finetuning maintains clean accuracy but does not remove the backdoor nearly as
well as \methodname. (See Appendix~\textbf{E}  for Additional experiments).


\subsection{Effect of the Sampling Probability \texorpdfstring{$p$}{p}}
\label{subsec:effect-sampling-p}

Table~\ref{tab:p_effect} presents the effect of the sampling probability \(p\) on clean and backdoor accuracy for MNIST and CIFAR-10.

\begin{table}[t]
\centering
\caption{\textbf{Effect of sampling probability \(p\) in \methodname{} (no projection, no noise).}
\(p{=}0\): finetuning (no targeted unlearning); \(p{=}1\): \emph{continuous unlearning} (always selecting the unlearning client). Mean \(\pm\) std over 3 seeds.}
\vspace{2pt}
\scriptsize
\setlength{\tabcolsep}{5pt}
\begin{tabular}{lccc}
\toprule
\textbf{Dataset} & \textbf{\(p\)} & \textbf{Test Acc. (\%)} & \textbf{Backdoor Acc. (\%)} \\
\midrule
\multirow{3}{*}{MNIST}
& \(0\) \ (finetuning)              & \(99.18 \pm 0.04\) & \(10.33 \pm 0.09\) \\
& \(0.1\) \ (\methodname)           & \(99.18 \pm 0.04\) & \(10.33 \pm 0.09\) \\
& \(1.0\) \ (continuous unlearning) & \(10.10 \pm 0.50\) & \(0.00 \pm 0.00\)  \\
\midrule
\multirow{3}{*}{CIFAR-10}
& \(0\) \ (finetuning)              & \(88.61 \pm 0.56\) & \(10.93 \pm 0.25\) \\
& \(0.1\) \ (\methodname)           & \(88.30 \pm 0.35\) & \(11.20 \pm 0.59\) \\
& \(1.0\) \ (continuous unlearning) & \(17.74 \pm 1.77\) & \(0.00 \pm 0.00\)  \\
\bottomrule
\end{tabular}
\label{tab:p_effect}
\end{table}

\noindent\textbf{Summary.}
Across MNIST and CIFAR-10, finetuning (\(p{=}0\)) and \methodname~ with \(p{=}0.1\) reach similar endpoints (clean \(\approx\) baseline; backdoor \(\approx 10\%\)), but \methodname~ forgets \emph{faster} in rounds thanks to targeted corrective updates. At \(p{=}1\) (\emph{continuous unlearning}), the backdoor vanishes (\(0\%\)) while clean accuracy collapses, indicating over-unlearning. A small nonzero \(p\) (e.g., \(0.1\)) is the sweet spot: near-scratch forgetting with preserved utility.

\section{Conclusion and Future Work}
\noindent We formalized decentralized certified unlearning (DCU) via client \emph{views} and proposed \methodname, a random-walk mechanism that adds noise only at the unlearning client while treating other updates as post-processing. Using network differential privacy, we certify $(\varepsilon,\delta)$ on views, prove last-iterate/stationarity guarantees for convex, strongly convex, and smooth nonconvex objectives, and identify a two-regime deletion capacity separating variance from alignment bias. Compared with DDP under the same privacy budget, \methodname~avoids group-privacy scaling in the forget-set size, reduces effective variance, and—on real-world datasets—achieves near scratch-level backdoor removal with strong retained accuracy. Limitations include fixed graphs, a single walker/unlearning client, honest-but-curious observers, Gaussian/independent noise, and largely i.i.d.\ data. Future work will extend our analysis to gossip and dynamic topologies, multiple concurrent deletions, non-i.i.d.\  regimes, and communication costs.



{
    \small
    \bibliographystyle{ieeenat_fullname}
    \bibliography{main}
}
\clearpage          
\onecolumn          
\appendix

\section{Additional Notation, Definitions, and Rényi-DP Tools}
\label{appendix:notation}

This section summarizes the notation used in the main paper, restates the
unlearning and Network-DP definitions, and collects the Rényi-DP tools used
in the analysis of the DDP (NetDP) baseline and RR-DU.

\begin{table}[h]
  \centering
  \begin{tabular}{ll}
    \toprule
    Symbol & Description \\
    \midrule
    $\mathcal{G} = (\mathcal{V},\mathcal{E})$
      & Communication graph (fully connected, undirected). \\
    $\mathcal{V} = \{1,\dots,N\}$
      & Set of users (clients). \\
    $\mathcal{E}$ & Edge set of the graph. \\
    $N = |\mathcal{V}|$ & Number of users. \\
    $u,v \in \mathcal{V}$ & User indices. \\
    $(u,v) \in \mathcal{E}$ & Edge between users $u$ and $v$. \\
    $D_u \subseteq \mathcal{Z}$ & Local dataset of user $u$. \\
    $n_u = |D_u|$ & Local dataset size at user $u$. \\
    $D = \bigcup_{u \in \mathcal{V}} D_u$ & Global dataset across all users. \\
    $n = |D| = \sum_{u \in \mathcal{V}} n_u$ & Global dataset size. \\
    $D_f \subseteq D_u$ & Forget (delete) set at user $u$. \\
    $m = |D_f|$ & Size of the forget set. \\
    $D_{u \setminus f} = D_u \setminus D_f$
      & Remaining data at user $u$ after deletion. \\
    $n_{u \setminus f} = |D_{u \setminus f}|$
      & Size of $D_{u \setminus f}$. \\
    $\mathcal{Z}$ & Data space. \\
    $\mathcal{P}$ & Data distribution over $\mathcal{Z}$. \\
    $D \sim \mathcal{P}^n$ & Dataset drawn i.i.d.\ from $\mathcal{P}$. \\
    \bottomrule
  \end{tabular}
  \caption{Graph, users, and datasets.}
  \label{tab:notation-graph-users-and-datasets}
\end{table}

\begin{table}[h]
  \centering
  \begin{tabular}{ll}
    \toprule
    Symbol & Description \\
    \midrule
    $\theta \in \Theta \subseteq \mathbb{R}^d$
      & Model parameter vector. \\
    $d$ & Model dimension (number of parameters). \\
    $\Theta$ & Feasible parameter domain. \\
    $\theta_0$ & Initial model before training / unlearning. \\
    $\theta_t$ & Model after $t$ updates. \\
    $\theta_T$ & Model after $T$ updates (final iterate). \\
    $\theta^\ast$ & Population risk minimizer. \\
    $\theta_{\mathrm{ref}}$ & Trust-region center. \\
    $\varrho$ & Trust-region radius around $\theta_{\mathrm{ref}}$. \\
    $B(\theta_{\mathrm{ref}},\varrho)$
      & Ball $\{\theta:\|\theta-\theta_{\mathrm{ref}}\|_2 \le \varrho\}$. \\
    $\Theta_{\mathrm{cert}}$ &
      Certification domain $\Theta \cap B(\theta_{\mathrm{ref}},\varrho)$. \\
    $\ell(\theta;z)$ & Per-example loss at data point $z$. \\
    $\ell_u(\theta)$ & Local objective at user $u$ on $D_u$. \\
    $\ell_{u \setminus f}(\theta)$
      & Local objective at user $u$ on $D_{u \setminus f}$. \\
    $\ell_f(\theta)$ & Average loss over the forget set $D_f$. \\
    $\LF(\theta)$ & Population risk $\E_{z\sim\mathcal{P}}[\ell(\theta;z)]$. \\
    $\LF^\ast$ & Optimal population risk value. \\
    $\mathcal{L}(\theta,\mathcal{V})$
      & Empirical risk on the global dataset $D$. \\
    \bottomrule
  \end{tabular}
  \caption{Model parameters, losses, and risks.}
  \label{tab:notation-model-losses-and-risks}
\end{table}

\begin{table}[h]
  \centering
  \begin{tabular}{ll}
    \toprule
    Symbol & Description \\
    \midrule
    $\mathcal{A}$ & Training algorithm (e.g., token Net-SGD or DDP NetDP). \\
    $\mathcal{U}$ & Unlearning algorithm (e.g., RR-DU). \\
    $\mathcal{C}$ & Certifying algorithm in unlearning definitions. \\
    $T(D)$ & Auxiliary statistics of $D$ used by $\mathcal{U}$ or $\mathcal{C}$. \\
    $\mathcal{A}(D)$ & Transcript or output of $\mathcal{A}$ on $D$. \\
    $O_u(\mathcal{A}(D))$ & View (partial transcript) observed by user $u$. \\
    $\Theta_u = \mathrm{Range}(O_u)$ & Observation space of user $u$'s views. \\
    $D \sim_u D'$ & Datasets differing only in user $u$'s data. \\
    $\varepsilon$ & Privacy / unlearning parameter (multiplicative). \\
    $\delta$ & Privacy / unlearning parameter (additive slack). \\
    $\gamma$ & Excess-risk tolerance in deletion capacity. \\
    $m^{\mathcal{A},\mathcal{U}}_{\varepsilon,\delta}(d,N)$ &
      Deletion capacity of $(\mathcal{A},\mathcal{U})$ at $(\varepsilon,\delta)$. \\
    \bottomrule
  \end{tabular}
  \caption{Algorithms, views, and privacy-related notation.}
  \label{tab:notation-algorithms-views-privacy}
\end{table}

\begin{table}[h]
  \centering
  \begin{tabular}{ll}
    \toprule
    Symbol & Description \\
    \midrule
    $R$ &
      Diameter of $\Theta$:
      $R := \sup_{\theta,\theta' \in \Theta}\|\theta-\theta'\|_2$. \\
    $R_{\mathrm{cert}}$ &
      Diameter of $\Theta_{\mathrm{cert}} :=
      \Theta \cap B(\theta_{\mathrm{ref}},\varrho)$, i.e., \\
      & $R_{\mathrm{cert}}
         := \sup_{\theta,\theta' \in \Theta_{\mathrm{cert}}}\|\theta-\theta'\|_2
         \le 2\varrho$. \\
    $L$ & Lipschitz / smoothness constant of the loss (and clipping threshold). \\
    $\mu$ & Strong-convexity constant (when assumed). \\
    $G^2$ & Upper bound on $\E[\|g_t\|_2^2]$ (gradient second moment). \\
    $\sigma^2$ & Gaussian noise variance per coordinate. \\
    $\sigma$ & Gaussian noise standard deviation. \\
    $\alpha$ & Rényi-DP order in privacy analysis. \\
    $\kappa$ & Bound on inverse Hessian / condition number (when used). \\
    $\eta$ & Constant stepsize in SGD / token updates. \\
    $\eta_t$ & Stepsize at iteration / hop $t$. \\
    $T$ & Number of training token hops / rounds. \\
    $T_u$ & Number of unlearning token hops / rounds. \\
    $t$ & Iteration / hop index. \\
    $s$ & Local averaging factor (minibatches per token visit). \\
    $p$ & Routing probability toward the unlearning user. \\
    $g_t$ & Stochastic gradient estimate at step $t$. \\
    $Z_t$ & Gaussian noise vector added at step $t$. \\
    $B_u$ & Minibatch sampled from $D_u$. \\
    $B_f$ & Minibatch sampled from the forget set $D_f$. \\
    \bottomrule
  \end{tabular}
  \caption{Optimization and analysis constants.}
  \label{tab:notation-optimization-constants}
\end{table}

\begin{table}[h]
  \centering
  \begin{tabular}{ll}
    \toprule
    Symbol / Name & Description \\
    \midrule
    RR-DU & Randomized-Restart Decentralized Unlearning algorithm. \\
    DDP NetDP & Decentralized-DP baseline (network-private SGD). \\
    DP-SGD & Differentially-private SGD baseline. \\
    Fine-tuning & Retraining baseline from the pre-unlearning model (no noise). \\
    FLNet & Lightweight convolutional network used on MNIST. \\
    ResNet-18 & Residual network used on CIFAR-10. \\
    clean acc. & Test accuracy on clean (unpoisoned) examples. \\
    ASR (backdoor acc.)
      & Test accuracy on backdoor-triggered examples. \\
    $y^{\mathrm{bd}}$ & Target label for backdoor (poisoned) samples. \\
    \bottomrule
  \end{tabular}
  \caption{Baselines, models, and evaluation metrics.}
  \label{tab:notation-baselines-models-and-metrics}
\end{table}

\subsection*{Unlearning and Network-DP Definitions}

\begin{definition}[$(\varepsilon,\delta)$-Certified Unlearning (global)]
\label{def:approx_unlearning-app}
Let $D$ be a dataset of size $n$ drawn from a distribution $\mathcal{P}$, and
let $D_f \subseteq D$ be a delete set with $|D_f| \le m$. Let $\mathcal{A}$ be
a learning algorithm that outputs $\mathcal{A}(D)\in\Theta$, and let
$\mathcal{U}$ be an unlearning algorithm that, given a delete set $D_f$, a
model, and data statistics $T(D)$, outputs
$\mathcal{U}(D_f, \mathcal{A}(D), T(D))\in\Theta$.  
We say that $(\mathcal{A},\mathcal{U})$ is \emph{$(\varepsilon,\delta)$-unlearning}
if there exists a (possibly problem-dependent) \emph{certifying algorithm}
$\mathcal{C}$ such that for all measurable sets $\theta \subseteq \Theta$:
\[
\begin{aligned}
& \Prob\!\big[ \mathcal{U}(D_f, \mathcal{A}(D), T(D)) \in \theta \big]
  \le e^{\varepsilon}\,\Prob\!\big[\mathcal{C}(D\setminus D_f) \in \theta \big]
  + \delta,\\[-0.25ex]
& \Prob\!\big[\mathcal{C}(D\setminus D_f) \in \theta \big]
  \le e^{\varepsilon}\,\Prob\!\big[
    \mathcal{U}(D_f, \mathcal{A}(D), T(D)) \in \theta
  \big] + \delta.
\end{aligned}
\]
\end{definition}

\begin{definition}[Network Differential Privacy (Network-DP)]
\label{def:network_dp-app}
An algorithm $\mathcal{A}$ satisfies $(\varepsilon,\delta)$-network DP if for
all distinct $u,v \in \mathcal{V}$, all $u$-neighboring datasets $D \sim_u D'$,
and all measurable sets $\theta \subseteq \Theta_v$,
\[
\Prob\!\big[\,O_v(\mathcal{A}(D)) \in \theta\,\big]
\;\le\;
e^{\varepsilon}\,\Prob\!\big[\,O_v(\mathcal{A}(D')) \in \theta\,\big]
+ \delta.
\]
\end{definition}

\begin{definition}[$(\varepsilon,\delta)$-Decentralized Certified Unlearning]
\label{def:decentralized_unlearning-app}
Let $\mathcal{A}$ produce $\mathcal{A}(D)$ and let $\mathcal{U}$ produce
$\mathcal{U}(D_f,\mathcal{A}(D))$. We say $(\mathcal{A},\mathcal{U})$ achieves
$(\varepsilon,\delta)$ \emph{decentralized certified unlearning} if there
exists a certifier $\mathcal{C}$ with transcript $\mathcal{C}(D\setminus D_f)$
such that for any deletion request by user $u$ (i.e., $D_f \subseteq D_u$),
any $v \neq u$, and all $\theta \subseteq \Theta_v$,
\[
\Prob\!\big[\,O_v(\mathcal{U}(D_f,\mathcal{A}(D))) \in \theta\,\big]
\;\le\;
e^{\varepsilon}\,\Prob\!\big[\,O_v(\mathcal{C}(D\setminus D_f)) \in \theta\,\big]
+ \delta,
\]
and the same inequality holds with $\mathcal{U}$ and $\mathcal{C}$ swapped.
\end{definition}

\begin{definition}[Deletion capacity]
\label{def:deletion_capacity-app}
Let $\varepsilon,\delta \ge 0$. Let $D \sim \mathcal{P}^n$ be drawn i.i.d.\
from a distribution $\mathcal{P}$, and let $\ell(\theta,z)$ be a loss.
Define the population risk $\LF(\theta) = \E_{z\sim \mathcal{P}}[\,\ell(\theta;z)\,]$
and $\LF^\ast = \min_{\theta \in \Theta} \LF(\theta)$.  
For a pair $(\mathcal{A}, \mathcal{U})$ that is $(\varepsilon,\delta)$-unlearning
(in either the global or decentralized sense above), and for a fixed tolerance
$\gamma > 0$, the \emph{deletion capacity}
$m^{\mathcal{A}, \mathcal{U}}_{\varepsilon,\delta}(d,N)$ is the largest integer
$m$ such that
\[
\E\!\left[\;\max_{D_f\subseteq D:\,|D_f|\le m}\;
  \big(\LF(\mathcal{U}(D_f, \mathcal{A}(D), T(D))) - \LF^\ast\big)\;\right]
\;\le\; \gamma,
\]
where the expectation is taken over $D \sim \mathcal{P}^n$ and over the
internal randomness of $\mathcal{A}$ and $\mathcal{U}$ (and any randomness in
$T$).
\end{definition}

\paragraph*{Unlearning via Differential Privacy (recap).}
Differential Privacy implies global certified unlearning with certifier
$\mathcal{C}(D\setminus D_f)=\mathcal{A}(D\setminus D_f)$, and Network-DP
implies decentralized certified unlearning on views with the same certifier.
These reductions underpin the deletion-capacity guarantees for the DDP NetDP
baseline.

\subsection{Rényi-DP and Network-RDP Tools}
\label{appendix:rdp-tools}

We now collect the Rényi-DP tools used in the DDP NetDP and RR-DU analysis,
following Mironov~\cite{RDP}, Feldman et al.~\cite{DBLP:conf/focs/FeldmanMTT18},
and the NetDP token-SGD analysis in~\cite{DBLP:conf/aistats/CyffersB22}.

\begin{definition}[Rényi divergence {\cite{RDP}}]
\label{def:renyi-divergence}
Let $1 < \alpha < \infty$ and let $\mu,\nu$ be probability measures such that
$\mu$ is absolutely continuous with respect to $\nu$. The Rényi divergence of
order $\alpha$ between $\mu$ and $\nu$ is
\[
D_{\alpha}(\mu\|\nu)
\;:=\;
\frac{1}{\alpha-1}
\log
\int \Big(\frac{d\mu}{d\nu}(z)\Big)^{\alpha} d\nu(z).
\]
If $U\sim\mu$ and $V\sim\nu$, we often write $D_{\alpha}(U\|V)$ for
$D_{\alpha}(\mu\|\nu)$.
\end{definition}

\begin{definition}[Rényi Differential Privacy {\cite{RDP}}]
\label{def:rdp}
A randomized mechanism $\mathcal{M}$ with domain $\mathcal{X}^n$ and range
$\mathcal{Y}$ is said to satisfy $(\alpha,\varepsilon)$-Rényi Differential
Privacy if for all neighboring datasets $D,D' \in \mathcal{X}^n$ and all
$1<\alpha<\infty$,
\[
D_{\alpha}\big(\mathcal{M}(D)\,\big\|\,\mathcal{M}(D')\big)
\;\le\;
\varepsilon.
\]
\end{definition}

\begin{definition}[Network Rényi-DP]
\label{def:network-rdp}
A decentralized algorithm $\mathcal{A}$ on a graph with views
$O_v(\mathcal{A}(D))$ satisfies $(\alpha,\varepsilon)$-\emph{network
Rényi-DP} if, for every pair of distinct users $u,v \in \mathcal{V}$ and
every $u$-neighboring datasets $D \sim_u D'$,
\[
D_{\alpha}\big(O_v(\mathcal{A}(D))\,\big\|\,O_v(\mathcal{A}(D'))\big)
\;\le\;
\varepsilon.
\]
\end{definition}

\begin{proposition}[RDP $\Rightarrow$ $(\varepsilon,\delta)$-DP conversion {\cite{RDP}}]
\label{prop:rdp-to-dp}
If a mechanism $\mathcal{M}$ satisfies $(\alpha,\varepsilon)$-RDP for some
$\alpha>1$, then for every $\delta \in (0,1)$ it satisfies
$(\varepsilon',\delta)$-DP with
\[
\varepsilon'
\;=\;
\varepsilon + \frac{\log(1/\delta)}{\alpha-1}.
\]
The same implication holds in the network setting by applying this bound to
each pair of neighboring datasets and each view.
\end{proposition}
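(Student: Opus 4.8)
The plan is to run the standard moment-to-tail (Chernoff--Markov) argument that underlies Mironov's conversion. Write $\mu$ and $\nu$ for the laws of $\mathcal{M}(D)$ and $\mathcal{M}(D')$ on a fixed pair of neighboring inputs, and let $r(z):=\frac{d\mu}{d\nu}(z)$ denote the density ratio (the exponentiated privacy loss). The first step is to recast the $(\alpha,\varepsilon)$-RDP hypothesis as a bound on a single moment. Using the change of measure $\int (d\mu/d\nu)^{\alpha}\,d\nu = \int (d\mu/d\nu)^{\alpha-1}\,d\mu$, the assumption $D_\alpha(\mu\|\nu)\le\varepsilon$ from \Cref{def:renyi-divergence} is equivalent (since $\alpha-1>0$) to
\[
\E_{z\sim\mu}\!\big[\,r(z)^{\alpha-1}\,\big] \;\le\; e^{(\alpha-1)\varepsilon}.
\]

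Second, I fix an arbitrary measurable $S\subseteq\Theta$ and split it using the privacy loss. Set $\varepsilon' := \varepsilon + \frac{\log(1/\delta)}{\alpha-1}$ and define the bad event $B := \{\,z : \log r(z) > \varepsilon'\,\}$. On $S\setminus B$ the pointwise bound $d\mu \le e^{\varepsilon'}\,d\nu$ holds by definition of $B$, so $\mu(S\setminus B)\le e^{\varepsilon'}\,\nu(S)$; this supplies the multiplicative term of the target inequality. It then remains to absorb $\mu(S\cap B)\le\mu(B)$ into the additive slack, i.e.\ to show $\mu(B)\le\delta$.

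Third, I control $\mu(B)$ by Markov's inequality applied to the $(\alpha-1)$-th moment from the first step. Because $r^{\alpha-1}>e^{(\alpha-1)\varepsilon'}$ is equivalent to $\log r > \varepsilon'$,
\[
\mu(B) \;=\; \Prob_{z\sim\mu}\!\big[\,r(z)^{\alpha-1} > e^{(\alpha-1)\varepsilon'}\,\big] \;\le\; \frac{\E_{z\sim\mu}[r(z)^{\alpha-1}]}{e^{(\alpha-1)\varepsilon'}} \;\le\; e^{(\alpha-1)(\varepsilon-\varepsilon')}.
\]
The calibration of $\varepsilon'$ is exactly what makes $(\alpha-1)(\varepsilon-\varepsilon') = \log\delta$, so the right-hand side equals $\delta$. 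Combining the two pieces gives $\mu(S)\le e^{\varepsilon'}\nu(S)+\delta$, and swapping the roles of $D$ and $D'$ yields the symmetric inequality, establishing $(\varepsilon',\delta)$-DP. For the network statement, the identical argument runs with $\mu,\nu$ taken to be the laws of the view $O_v(\mathcal{A}(D))$ and $O_v(\mathcal{A}(D'))$ for each fixed observer $v$ and each pair of $u$-neighboring datasets, since the network-RDP hypothesis (\Cref{def:network-rdp}) holds per view.

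As for difficulty, there is no genuine obstacle here: this is a classical conversion and, once the moment bound is in hand, the tail estimate and the choice of $\varepsilon'$ are mechanical. The only point requiring care is the change-of-measure identity in the first step—getting the $(\alpha-1)$ exponent and the correct base measure $\mu$ versus $\nu$—after which everything follows by a one-line Markov bound.
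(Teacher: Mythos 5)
Your proof is correct and is exactly the standard Chernoff--Markov tail argument from Mironov's paper, which is what the paper itself defers to via the citation to \cite{RDP} (no proof is given in the appendix). The moment identity, the split over the bad event $B=\{\log r>\varepsilon'\}$, and the calibration $(\alpha-1)(\varepsilon'-\varepsilon)=\log(1/\delta)$ are all as in the cited source, and the per-view extension is the same observation the paper uses in \Cref{prop:network-rdp-to-dp}.
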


\begin{theorem}[RDP of PNSGD {\cite{DBLP:conf/focs/FeldmanMTT18,DBLP:conf/aistats/CyffersB22}}]
\label{thm:rdp-pnsgd}
Let $W\subset\mathbb{R}^d$ be convex and let $\{f(\cdot;x)\}_{x\in\mathcal{X}}$
be a family of convex, $L$-Lipschitz and $\beta$-smooth functions on $W$.
Consider Projected Noisy Stochastic Gradient Descent (PNSGD) on dataset
$D=(x_1,\dots,x_n)$:
\[
w_{t+1}
=
\Pi_{W}\big(w_t - \eta(\nabla f(w_t;x_{t+1}) + Z)\big),
\qquad
Z\sim\mathcal{N}(0,\sigma^2 I_d),
\]
run for $n$ steps with stepsize $\eta \le 2/\beta$. Then, for any order
$\alpha>1$ and any $t\in\{1,\dots,n\}$, the mechanism satisfies
$(\alpha,\varepsilon_t)$-RDP with respect to the $t$-th input, where
\[
\varepsilon_t
\;=\;
\frac{\alpha \cdot 2L^2}{\sigma^2 (n+1-t)}.
\]
\end{theorem}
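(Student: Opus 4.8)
The plan is to prove this as an instance of \emph{privacy amplification by iteration}, carried out in the shifted Rényi-divergence framework of Feldman et al.~\cite{DBLP:conf/focs/FeldmanMTT18}. Fix the order $\alpha$ and the index $t$, and let $(w_i)$ and $(w_i')$ be two runs of PNSGD driven by inputs that agree in every coordinate except the $t$-th. Because $x_1,\dots,x_{t-1}$ coincide, the two runs induce the \emph{same} law for $w_{t-1}$; because $x_{t+1},\dots,x_n$ coincide, they apply \emph{identical} update maps at every step after $t$. The quantity to control is $D_\alpha(\mathrm{law}(w_n)\,\|\,\mathrm{law}(w_n'))$, after which the claimed $(\alpha,\varepsilon_t)$-RDP is exactly \Cref{def:rdp}. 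I would model a single round as a \emph{contractive noisy iteration}: the gradient map $\phi_i(w):=w-\eta\nabla f(w;x_i)$, then the injection of $\eta Z_i$ with $Z_i\sim\mathcal{N}(0,\sigma^2 I_d)$, then the projection $\Pi_W$, so that the only discrepancy between the two runs is concentrated in the single map $\phi_t$ versus $\phi_t'$.

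Two problem-specific facts drive the bound, and I would establish them first. (i) \emph{Non-expansiveness}: for convex, $\beta$-smooth $f$ and $0<\eta\le 2/\beta$, co-coercivity of the gradient gives $\norm{\phi_i(x)-\phi_i(y)}\le\norm{x-y}$, and $\Pi_W$ is $1$-Lipschitz as a Euclidean projection onto a convex set, so each round is non-expansive. (ii) \emph{Sensitivity}: at step $t$ both runs start from the common point $w_{t-1}$, so the pre-noise outputs differ by $\norm{\eta\nabla f(w_{t-1};x_t)-\eta\nabla f(w_{t-1};x_t')}\le 2L\eta$ using the $L$-Lipschitz (gradient-norm) bound; hence the two pre-noise laws at step $t$ lie within $W_\infty$-distance $s:=2L\eta$, so their shifted divergence at shift $s$ vanishes. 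The generic engine is then two lemmas on the shifted divergence $D_\alpha^{(z)}$: the \emph{contraction lemma}, $D_\alpha^{(z)}(\psi_\#\mu\,\|\,\psi_\#\nu)\le D_\alpha^{(z)}(\mu\,\|\,\nu)$ for non-expansive $\psi$, and the \emph{shift-reduction lemma}, which for convolution with $\mathcal{N}(0,\eta^2\sigma^2 I_d)$ lets one trade a shift decrement $a\ge 0$ against an additive divergence cost $\alpha a^2/(2\eta^2\sigma^2)$.

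With these in hand I would run the backward recursion from $w_n$ (shift $0$) down to step $t$: each non-expansive round is peeled off by the contraction lemma at no cost to the shift, while each intervening Gaussian injection at steps $t,t+1,\dots,n$ (there are exactly $n+1-t$ of them) absorbs, via the shift-reduction lemma, a portion $a_i$ of the sensitivity at cost $\alpha a_i^2/(2\eta^2\sigma^2)$. By the time the recursion reaches the diverging step $t$, the accumulated shift must equal $s=2L\eta$ so that the base divergence is $0$ by fact (ii); hence $D_\alpha(w_n\|w_n')\le\sum_{i=t}^{n}\alpha a_i^2/(2\eta^2\sigma^2)$ subject to $\sum_{i=t}^{n} a_i=s$. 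Minimizing this convex allocation yields the equal split $a_i=s/(n+1-t)$ and the bound $\alpha s^2/\big(2\eta^2\sigma^2(n+1-t)\big)=2\alpha L^2/\big(\sigma^2(n+1-t)\big)$, where the $\eta^2$ cancels between $s^2=4L^2\eta^2$ and the effective noise variance $\eta^2\sigma^2$. The main obstacle is the careful bookkeeping of the shifted-divergence recursion—in particular, verifying that the projection-after-noise ordering still fits the contractive-noisy-iteration template and that exactly $n+1-t$ noise injections are available (the off-by-one that produces $n+1-t$ rather than $n-t$); the contraction and sensitivity facts themselves are routine.
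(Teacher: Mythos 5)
The paper does not prove \Cref{thm:rdp-pnsgd} at all---it imports the result verbatim from the cited references---so the only meaningful comparison is against the argument in Feldman et al.\ \cite{DBLP:conf/focs/FeldmanMTT18}, and your reconstruction is exactly that argument: contractive noisy iteration with shifted R\'enyi divergences, non-expansiveness of the gradient map via co-coercivity for $\eta\le 2/\beta$, sensitivity $2L\eta$ at the single diverging step, and the equal allocation of the shift across the $n+1-t$ Gaussian injections, with the $\eta^2$ correctly cancelling between the squared sensitivity and the effective noise variance $\eta^2\sigma^2$. The two bookkeeping issues you flag (absorbing each post-noise projection into the contractive map of the following round, with the final projection handled by post-processing, and the off-by-one giving $n+1-t$ rather than $n-t$) are resolved exactly as you indicate, so the proof is correct and complete in outline.
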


\begin{proposition}[Weak convexity of Rényi divergence {\cite[Appendix A]{DBLP:conf/aistats/CyffersB22}}]
\label{prop:weak-convexity-rdp}
Let $\mu_1,\dots,\mu_m$ and $\nu_1,\dots,\nu_m$ be probability distributions
on a common measurable space, and suppose that for some $c\in(0,1]$ and all
$i\in[m]$,
\[
D_{\alpha}(\mu_i\|\nu_i)
\;\le\;
\frac{c}{\alpha-1}.
\]
Let $\rho$ be a distribution on $[m]$ and define $\mu_{\rho}$ (resp.\ $\nu_\rho$)
as the mixture distribution obtained by first sampling $i\sim\rho$ and then a
sample from $\mu_i$ (resp.\ $\nu_i$). Then
\[
D_{\alpha}(\mu_{\rho}\|\nu_{\rho})
\;\le\;
(1+c)\cdot \mathbb{E}_{i\sim\rho}\big[D_{\alpha}(\mu_i\|\nu_i)\big].
\]
\end{proposition}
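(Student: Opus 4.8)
The plan is to avoid working with $D_\alpha$ directly and instead pass to the exponentiated moment $Q_\alpha(\mu\|\nu) := \int (d\mu/d\nu)^{\alpha}\,d\nu = e^{(\alpha-1)D_\alpha(\mu\|\nu)}$, on which the mixture operation behaves much more cleanly. Under this substitution the hypothesis $D_\alpha(\mu_i\|\nu_i)\le c/(\alpha-1)$ reads simply as $a_i := (\alpha-1)D_\alpha(\mu_i\|\nu_i) = \log Q_\alpha(\mu_i\|\nu_i) \in [0,c]$, where the lower bound $a_i\ge 0$ is just nonnegativity of the Rényi divergence. The whole claim will then reduce to controlling the weighted log-average of the quantities $e^{a_i}$ against the weighted average of the $a_i$ themselves.

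First I would establish the \emph{exact} (unweighted) joint convexity of the moment along the mixture, namely $Q_\alpha(\mu_\rho\|\nu_\rho)\le \E_{i\sim\rho}[Q_\alpha(\mu_i\|\nu_i)]$. Fixing a common dominating measure $\lambda$ with densities $p_i,q_i$, so that $\mu_\rho,\nu_\rho$ have densities $\sum_i\rho_i p_i,\ \sum_i\rho_i q_i$, this is obtained by integrating over $\lambda$ the pointwise inequality $g\!\big(\sum_i\rho_i p_i,\sum_i\rho_i q_i\big)\le \sum_i\rho_i\, g(p_i,q_i)$ for $g(x,y)=x^{\alpha}y^{1-\alpha}$. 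The key structural fact is that $g$ is the perspective $g(x,y)=y\,\phi(x/y)$ of the map $\phi(t)=t^{\alpha}$, which is convex because $\alpha>1$; the perspective of a convex function is jointly convex on $\R_{>0}^2$, so the pointwise bound is exactly Jensen's inequality for the distribution $\rho$. Taking logarithms and dividing by $(\alpha-1)$ then gives $(\alpha-1)D_\alpha(\mu_\rho\|\nu_\rho)\le \log \E_{i}[e^{a_i}]$.

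The remaining work is the elementary conversion that manufactures the $(1+c)$ factor, and this is the step I expect to be the only genuinely delicate one: naively taking $\log$ of the clean convexity bound would destroy convexity, so the smallness $a_i\le c$ must be exploited rather than discarded. Since $t\mapsto e^{t}$ is convex, on $[0,c]$ it lies below its chord, $e^{a_i}\le 1+a_i\,\tfrac{e^{c}-1}{c}$; taking expectations and then applying $\log(1+x)\le x$ yields $\log\E_i[e^{a_i}]\le \tfrac{e^{c}-1}{c}\,\E_i[a_i]$. I would finish with the numerical fact $\tfrac{e^{c}-1}{c}\le 1+c$ for $c\in(0,1]$, which follows from the series $\tfrac{e^{c}-1}{c}=1+\tfrac{c}{2}+\tfrac{c^2}{6}+\cdots$ together with $\tfrac12+\tfrac{c}{6}+\tfrac{c^2}{24}+\cdots\le e-2<1$ on $(0,1]$. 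Substituting back $a_i=(\alpha-1)D_\alpha(\mu_i\|\nu_i)$ and dividing through by $(\alpha-1)$ delivers $D_\alpha(\mu_\rho\|\nu_\rho)\le (1+c)\,\E_{i\sim\rho}[D_\alpha(\mu_i\|\nu_i)]$, as required.
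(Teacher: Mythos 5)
The paper does not prove this proposition itself: it is imported verbatim from the cited source (Cyffers--Bellet, which in turn follows Feldman et al.'s ``privacy amplification by iteration''), so there is no in-paper proof to compare against. Your argument is correct and is essentially the standard one from that source: pass to the exponentiated moment $Q_\alpha=e^{(\alpha-1)D_\alpha}$, use joint convexity of $(x,y)\mapsto x^\alpha y^{1-\alpha}$ (the perspective of $t\mapsto t^\alpha$) to get $Q_\alpha(\mu_\rho\|\nu_\rho)\le\E_i[Q_\alpha(\mu_i\|\nu_i)]$, then exploit $a_i\in[0,c]$ via the chord bound $e^{a}\le 1+a\,(e^c-1)/c$, $\log(1+x)\le x$, and $(e^c-1)/c\le 1+c$ on $(0,1]$. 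All steps check out, including the numerical estimate $(e^c-1)/c-1\le c(e-2)<c$; the only point worth making explicit is that finiteness of each $D_\alpha(\mu_i\|\nu_i)$ gives $\mu_i\ll\nu_i$ and hence $\mu_\rho\ll\nu_\rho$, so the densities and the boundary convention for $x^\alpha y^{1-\alpha}$ at $y=0$ are handled consistently.
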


\begin{lemma}[View-level RDP for token SGD on complete graphs {\cite[Theorem~4]{DBLP:conf/aistats/CyffersB22}}]
\label{lem:view-rdp-token}
Consider token-based SGD with Gaussian noise $\mathcal{N}(0,\sigma^2 I_d)$ on a
complete graph with $N$ users, as in the DDP NetDP analysis
of~\cite{DBLP:conf/aistats/CyffersB22}. Let $T_u$ be the number of visits to
user $u$, and fix distinct users $u\neq v$. Then, for each order $\alpha>1$,
there exists an absolute constant $C>0$ such that for any pair of
$u$-neighboring datasets $D\sim_u D'$,
\[
D_{\alpha}\big(Y_v\|Y_v'\big)
\;\le\;
C \cdot
\frac{\alpha L^2\,T_u\,\ln N}{\sigma^2\,N},
\]
where $Y_v$ and $Y_v'$ denote the random views of user $v$ under $D$ and $D'$,
respectively.
\end{lemma}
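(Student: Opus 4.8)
\emph{Proof plan.} The plan is to reproduce the amplification-by-decentralization argument underlying \cite[Theorem~4]{DBLP:conf/aistats/CyffersB22}, combining amplification by iteration (\Cref{thm:rdp-pnsgd}) with the random-walk hitting-time structure of the complete graph, glued together by the weak convexity of R\'enyi divergence (\Cref{prop:weak-convexity-rdp}). First I would observe that swapping $D\sim_u D'$ changes the computation \emph{only} at the hops where the token sits at $u$: every other projected noisy gradient step is identical in law under $D$ and $D'$, and forwarding the token is a measurable transformation that does not depend on $u$'s records, hence post-processing with respect to $u$'s data. Since $v$'s view $Y_v$ is exactly the sequence of token states recorded at the (random) times the walk arrives at $v$, it is a post-processing of these forwarded states, so it suffices to bound the leakage injected at the $T_u$ visits to $u$ and propagated to the subsequent arrivals at $v$.

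Next I would isolate a single visit to $u$ and condition on the walk trajectory. Let $K$ denote the number of projected noisy gradient steps separating that visit from the next arrival at $v$. Along this stretch the iterate undergoes $K$ non-expansive (projected) noisy updates after the sensitive one, so \Cref{thm:rdp-pnsgd} applies with the quantity $n+1-t$ played by $K$, giving the conditional bound $D_\alpha\le \alpha\cdot 2L^2/(\sigma^2 K)$. The key probabilistic input is the law of $K$: on a complete graph the token leaves $u$ and at each subsequent step lands on $v$ with probability $1/(N-1)$, so $K$ is (stochastically dominated by) a geometric variable with success parameter $q=1/(N-1)$.

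I would then average over the trajectory. Because $Y_v$ is a mixture over the random gap $K$ of the conditional output laws, I would invoke \Cref{prop:weak-convexity-rdp} to pass the divergence bound through the mixture, yielding a per-visit bound of order $\tfrac{\alpha L^2}{\sigma^2}\,\mathbb{E}[1/K]$. The harmonic-type sum for a geometric variable evaluates to $\mathbb{E}[1/K]=\tfrac{-q\ln q}{1-q}=\Theta\!\big(\tfrac{\ln N}{N}\big)$, which is exactly where the $\ln N/N$ factor originates. Finally, adaptive RDP composition across the $T_u$ visits to $u$ multiplies the per-visit bound by $T_u$, producing $D_\alpha(Y_v\|Y_v')\le C\,\alpha L^2 T_u \ln N/(\sigma^2 N)$ for an absolute constant $C$.

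The step I expect to be the main obstacle is the passage through the mixture over $K$. \Cref{prop:weak-convexity-rdp} requires each conditional divergence to satisfy $D_\alpha(\mu_k\|\nu_k)\le c/(\alpha-1)$ with $c\in(0,1]$, a condition that fails for the smallest values of $K$ (when $v$ is hit almost immediately after $u$ and the iterate has undergone too few contractions), and these are precisely the high-leakage events. Controlling them requires either a lower bound on $\sigma$ that enforces the regularity condition uniformly, or treating the $O(1)$ small-$K$ terms separately and noting they carry geometric probability $\approx q=O(1/N)$, so their contribution is absorbed into the same $\Theta(\ln N/N)$ bound. A second piece of bookkeeping is that the token may revisit $u$ before reaching $v$, which resets the amplified stretch; this is handled cleanly by attaching each sensitive exposure to the \emph{last} $u$-visit preceding a $v$-observation and letting composition over the $T_u$ visits do the accounting. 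Making this truncation rigorous, together with verifying that the convex, $L$-Lipschitz, $\beta$-smooth and $\eta\le 2/\beta$ hypotheses of \Cref{thm:rdp-pnsgd} hold along the random-walk stretch, is the technical heart of the argument.
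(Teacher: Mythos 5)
Your proposal follows exactly the route the paper indicates (the paper itself does not prove this lemma but imports it from \cite[Theorem~4]{DBLP:conf/aistats/CyffersB22}, with a remark stating that it combines amplification by iteration for PNSGD (\Cref{thm:rdp-pnsgd}), the geometric hitting-time structure of the random walk, and weak convexity of R\'enyi divergence (\Cref{prop:weak-convexity-rdp})): your decomposition into sensitive visits, the $\mathbb{E}[1/K]=\Theta(\ln N/N)$ computation, and the composition over $T_u$ visits reconstruct that argument faithfully. Your identification of the regularity condition $D_\alpha\le c/(\alpha-1)$ in the mixture step as the technical crux is also accurate, as this is precisely the condition handled in the cited reference.
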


\begin{remark}
Lemma~\ref{lem:view-rdp-token} is the network version of amplification by
iteration for PNSGD (Theorem~\ref{thm:rdp-pnsgd}), combined with the random
walk structure and weak convexity of Rényi divergence
(Proposition~\ref{prop:weak-convexity-rdp}) as in~\cite{DBLP:conf/aistats/CyffersB22}.
\end{remark}

\begin{proposition}[Network-RDP $\Rightarrow$ Network-DP]
\label{prop:network-rdp-to-dp}
If a decentralized algorithm $\mathcal{A}$ satisfies
$(\alpha,\varepsilon)$-network RDP in the sense of
Definition~\ref{def:network-rdp}, then for every $\delta \in (0,1)$ it
satisfies $(\varepsilon',\delta)$-network DP with
\[
\varepsilon'
\;=\;
\varepsilon + \frac{\log(1/\delta)}{\alpha-1}.
\]
\end{proposition}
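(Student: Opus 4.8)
The plan is to reduce Proposition~\ref{prop:network-rdp-to-dp} to the standard Rényi-to-approximate-DP conversion (Proposition~\ref{prop:rdp-to-dp}) applied \emph{one observer at a time}. The essential observation is that both the conversion lemma and the two network notions are quantified over the same family of distribution pairs: for each ordered pair of distinct users $u\neq v$ and each pair of $u$-neighboring datasets $D\sim_u D'$, we obtain two concrete probability measures on the observation space $\Theta_v$, namely $\mu:=O_v(\mathcal{A}(D))$ and $\nu:=O_v(\mathcal{A}(D'))$. The network hypothesis and the desired network conclusion are nothing more than uniform statements over these pairs.

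First I would record the distributional form of the conversion. The proof of Proposition~\ref{prop:rdp-to-dp} never uses that $\mu,\nu$ arise as a mechanism's outputs on neighboring databases; it uses only the single inequality $D_\alpha(\mu\|\nu)\le\varepsilon$. Concretely, for any two measures with $D_\alpha(\mu\|\nu)\le\varepsilon$ and any measurable set $\theta$, a Hölder/Markov argument on the likelihood ratio yields $\mu(\theta)\le e^{\varepsilon'}\nu(\theta)+\delta$ with $\varepsilon'=\varepsilon+\log(1/\delta)/(\alpha-1)$. I would state this as the per-pair content of Proposition~\ref{prop:rdp-to-dp} and invoke it as a black box.

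Next I would instantiate it on views. Fix distinct $u\neq v$ and $u$-neighboring $D\sim_u D'$. Network RDP (Definition~\ref{def:network-rdp}) gives exactly $D_\alpha\big(O_v(\mathcal{A}(D))\,\|\,O_v(\mathcal{A}(D'))\big)\le\varepsilon$, so $\mu,\nu$ on $\Theta_v$ satisfy the hypothesis of the distributional conversion. Applying it yields, for every measurable $\theta\subseteq\Theta_v$,
\[
\Prob\big[O_v(\mathcal{A}(D))\in\theta\big]\le e^{\varepsilon'}\,\Prob\big[O_v(\mathcal{A}(D'))\in\theta\big]+\delta,
\]
which is precisely the $(\varepsilon',\delta)$-network DP inequality of Definition~\ref{def:network_dp-app} for this $(u,v,D,D')$ configuration. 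Since the ordered pair $(u,v)$, the neighboring datasets, and $\theta$ were arbitrary, taking the union over all such configurations establishes $(\varepsilon',\delta)$-network DP; both one-sided directions are covered automatically because $\sim_u$ is symmetric and the quantification ranges over all ordered pairs.

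I do not expect a genuine obstacle here: the entire content is that network RDP/DP inherit the centralized conversion observer-by-observer. The only point requiring a line of care is justifying that the conversion is a per-pair (distributional) statement rather than one intrinsically tied to the global RDP definition—i.e., that restricting to $u$-neighboring datasets and a single fixed observer $v$ does not interfere with applying Proposition~\ref{prop:rdp-to-dp}. This is immediate, since the conversion is applied separately to each fixed pair $(\mu,\nu)$, and both network definitions range over exactly the same index set of such pairs.
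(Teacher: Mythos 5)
Your proposal is correct and matches the paper's proof, which likewise just applies Proposition~\ref{prop:rdp-to-dp} separately to each pair of $u$-neighboring datasets and each observer's view. The extra care you take in noting that the conversion is a purely distributional statement (depending only on $D_\alpha(\mu\|\nu)\le\varepsilon$ for the fixed pair) is the right justification, and the paper leaves it implicit.
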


\begin{proof}
Apply Proposition~\ref{prop:rdp-to-dp} to each pair of neighboring datasets
$D \sim_u D'$ and each view $O_v(\mathcal{A}(D))$ separately.
\end{proof}

\section{Algorithms and Implementation Details}
\label{appendix:algorithms}

This section provides the pseudocode for the three procedures used throughout
the main paper: (i) the token-based \emph{Network-SGD} training phase, which
produces the pre-unlearning model; (ii) the \emph{Network-Private SGD} baseline
implementing DDP NetDP; and (iii) \emph{RR-DU}, our proposed randomized-restart
decentralized unlearning algorithm. All algorithms are stated in the notation
of Section~\ref{appendix:notation}.

\subsection{Network-SGD (Token Training)}

Network-SGD is the basic decentralized training routine used to obtain the
initial model $\theta_0$ before any deletion request. A single token carries
the current parameter vector $\theta$ and performs a random walk over the user
graph. Each time the token visits a user $u$, that user computes a local
stochastic gradient step on its dataset $D_u$ and then forwards the token to a
uniformly random neighbor.

Algorithm~\ref{alg:network_sgd} makes this process explicit: it takes as input
the communication graph $\mathcal{G}$, an initial token location $u_0$, the
local datasets $\{D_u\}_{u\in\mathcal{V}}$, and runs for $T$ token hops. The
final model $\theta$ returned by Algorithm~\ref{alg:network_sgd} is used as the
starting point for both the DDP NetDP baseline and RR-DU in the experiments.

\begin{algorithm}[h]
\caption{Network-SGD (token-based decentralized training)}
\label{alg:network_sgd}
\begin{algorithmic}[1]
\Require Graph $\mathcal{G} = (\mathcal{V},\mathcal{E})$,
         initial token location $u_0 \in \mathcal{V}$,
         stepsize $\eta > 0$, horizon $T \in \mathbb{N}$,
         local datasets $\{D_u\}_{u \in \mathcal{V}}$
\State Initialize model $\theta \in \Theta \subseteq \R^d$
\State $u \gets u_0$
\For{$t = 1$ to $T$}
  \State Sample minibatch $B_u \subseteq D_u$
  \State $g_t \gets \nabla_{\theta}\,\ell(\theta; B_u)$
  \State $\theta \gets \theta - \eta\, g_t$
  \State Sample neighbor
         $v \sim \mathrm{Unif}\{w \in \mathcal{V} : (u,w) \in \mathcal{E}\}$
  \State $u \gets v$ \Comment{forward the token}
\EndFor
\State \textbf{return} $\theta$
\end{algorithmic}
\end{algorithm}

\subsection{Network-Private SGD (DDP NetDP Baseline)}

Network-Private SGD is the decentralized-DP baseline (DDP NetDP) used in our
deletion-capacity analysis. It has the same token structure as Network-SGD but
adds Gaussian noise to each update and projects back onto the feasible set
$\Theta$. The noise scale is chosen as a function of $(\varepsilon,\delta)$ and
the Lipschitz bound $L$, following the DDP calibration used in the main paper.

Algorithm~\ref{alg:network_private_sgd} summarizes this procedure. At each
token hop, a user $u$ is sampled uniformly from $\mathcal{V}$, a minibatch
$B_u$ is drawn, and a noisy gradient $g_t + Z_t$ is applied before projecting
onto $\Theta$. This is the algorithm used as the DDP NetDP baseline in the
experiments and as the certifier in the theoretical results on decentralized
DP deletion capacity.

\begin{algorithm}[h]
\caption{Network-Private SGD (DDP NetDP baseline)}
\label{alg:network_private_sgd}
\begin{algorithmic}[1]
\Require Convex set $\Theta \subset \R^d$, stepsize $\eta>0$,
         horizon $T \in \mathbb{N}$, gradient bound $L$
         (i.e., $\|\nabla_\theta \ell(\theta;z)\|_2 \le L$ for all $z$),
         target privacy $(\varepsilon,\delta)$,
         datasets $\{D_u\}_{u \in \mathcal{V}}$
\State Initialize $\theta \in \Theta$
\State $\sigma^{2} \gets
  \dfrac{8 L^{2}\,\ln\!\big(1.25/\delta\big)}{\varepsilon^{2}}$
\For{$t = 1$ to $T$}
  \State Draw $u \sim \mathrm{Unif}(\mathcal{V})$ \Comment{token visit}
  \State Sample minibatch $B_u \subseteq D_u$
  \State $g_t \gets \nabla_{\theta}\,\ell(\theta; B_u)$
  \State Draw $Z_t \sim \mathcal{N}(0,\sigma^{2} I_{d})$
  \State $\theta \gets \Pi_{\Theta}\!\big(\theta - \eta\,(g_t + Z_t)\big)$
\EndFor
\State \textbf{return} $\theta$
\end{algorithmic}
\end{algorithm}

\subsection{RR-DU: Randomized-Restart Decentralized Unlearning}

RR-DU is the main unlearning algorithm proposed in the paper. Starting from a
pretrained model $\theta_0$ and a forget set $D_f \subseteq D_u$ at some user
$u$, RR-DU runs a second token phase that interleaves (i) noisy corrective
steps at the unlearning user $u$ and (ii) standard SGD steps at the other
users. The token is routed towards $u$ with probability $p$ and to a random
non-unlearning user otherwise, implementing the randomized-restart / randomized
routing mechanism.

Algorithm~\ref{alg:rrdu} gives the full implementation used in our experiments.
Corrective steps at $u$ are constrained to a trust region
$\Theta_{\mathrm{cert}} = \Theta \cap B(\theta_{\mathrm{ref}},\varrho)$ via
projection, with Gaussian noise calibrated to the desired decentralized
certified-unlearning guarantee. The algorithm supports two modes: \textsc{Exact}
(which uses the full gradient of $\ell_{u \setminus f}$) and
\textsc{Lightweight} (which approximates the alignment term using minibatches
from $D_f$). All experimental curves for \methodname{} in the main paper are
generated with variants of Algorithm~\ref{alg:rrdu}.

\begin{algorithm}[h]
\caption{RR-DU: Randomized-Restart Decentralized Unlearning}
\label{alg:rrdu}
\begin{algorithmic}[1]
\Require Initial model $\theta_0 \in \Theta$, unlearning user $u \in \mathcal{V}$,
         forget set $D_f \subseteq D_u$ of size $m$, routing probability
         $p \in (0,1]$, stepsizes $\{\eta_t\}_{t=1}^{T_u}$, noise scale
         $\sigma > 0$, feasible set $\Theta$, trust-region center
         $\theta_{\mathrm{ref}}$ and radius $\varrho > 0$,
         unlearning horizon $T_u \in \mathbb{N}$,
         local averaging factor $s \in \mathbb{N}$,
         mode $\mathrm{mode} \in \{\textsc{Exact},\textsc{Lightweight}\}$
\State $\theta \gets \theta_0$
\State Choose initial token location $v \in \mathcal{V}$ (e.g., uniformly)
\For{$t = 1$ to $T_u$}
  \State \textbf{Routing:} with probability $p$ set $v \gets u$;
         otherwise draw $v \sim \mathrm{Unif}(\mathcal{V} \setminus \{u\})$
  \If{$v = u$} \Comment{noisy corrective step at the unlearning user}
    \State Draw $Z_t \sim \mathcal{N}(0,\sigma^2 I_d)$
    \If{$\mathrm{mode} = \textsc{Exact}$}
      \State $g_u \gets -\,\nabla_\theta \ell_{u \setminus f}(\theta)$
    \Else \Comment{\textsc{Lightweight} alignment}
      \State Form minibatch $B_f \subseteq D_f$
      \State $g_u \gets \dfrac{m}{n_u}\,\nabla_\theta \ell(\theta; B_f)$
    \EndIf
    \State $\theta \gets
       \Pi_{\Theta_{\mathrm{cert}}}
       \big(\theta + \eta_t (g_u + Z_t)\big)$
  \Else \Comment{noiseless projected SGD on a non-unlearning user}
    \State Draw $s$ i.i.d.\ minibatches
           $B_v^{(1)},\dots,B_v^{(s)} \subseteq D_v$
    \State $g_v \gets
       \dfrac{1}{s}\sum_{i=1}^{s}\nabla_\theta \ell(\theta; B_v^{(i)})$
    \State $\theta \gets \Pi_{\Theta}\big(\theta - \eta_t g_v\big)$
  \EndIf
\EndFor
\State \textbf{return} $\theta$
\end{algorithmic}
\end{algorithm}

\section{Proofs of Main Results}
\label{appendix:proofs}

This section (Section~C in the supplementary material) contains detailed proofs
of the main theoretical results. We first give the full proof of the
deletion-capacity bound for the DDP NetDP baseline (Theorem~3.5), then discuss
the effect of decentralization and local averaging, and finally collect our
optimization assumptions and RR-DU utility / capacity analysis.

\subsection{Proof of Theorem~3.5 (Deletion Capacity of DDP NetDP)}
\label{appendix:proof-theorem3.5}

We start by recalling the reduction from Network-DP
(Definition~\ref{def:network_dp-app}) to decentralized certified unlearning
(Definition~\ref{def:decentralized_unlearning-app}), together with a standard
group-privacy bound. These are used in the deletion-capacity proof for the DDP
NetDP baseline~\cite{DBLP:conf/aistats/CyffersB22}.

\paragraph{Network-DP implies DCU and group privacy.}
Let $\mathcal{A}$ be a decentralized algorithm on a graph that satisfies
$(\varepsilon_0,\delta_0)$-network DP in the sense of
Definition~\ref{def:network_dp-app}. Recall that $O_v(\mathcal{A}(D))$ denotes
the view of user $v$ and $\Theta_v := \mathrm{Range}(O_v)$.

\begin{proposition}[Network-DP implies decentralized certified unlearning]
\label{prop:dp_implies_dcu}
Let $\mathcal{A}$ satisfy $(\varepsilon_0,\delta_0)$-network DP. Define
$\mathcal{U}(D_f,\mathcal{A}(D)) := \mathcal{A}(D)$ and
$\mathcal{C}(D\setminus D_f) := \mathcal{A}(D\setminus D_f)$.
Assume $D$ and $D\setminus D_f$ differ only in the data of a single user $u$.
Then, for any $v \neq u$, the pair $(\mathcal{A},\mathcal{U})$ achieves
$(\varepsilon_0,\delta_0)$ decentralized certified unlearning on views in the
sense of Definition~\ref{def:decentralized_unlearning-app}.
\end{proposition}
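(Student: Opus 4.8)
The plan is to reduce the two decentralized-certified-unlearning inequalities of \Cref{def:decentralized_unlearning-app} directly to the network-DP guarantee of \Cref{def:network_dp-app}, exploiting the fact that with $\mathcal{U}(D_f,\mathcal{A}(D))=\mathcal{A}(D)$ and $\mathcal{C}(D\setminus D_f)=\mathcal{A}(D\setminus D_f)$, both sides of each DCU inequality are simply views of the \emph{same} mechanism $\mathcal{A}$ evaluated on the two datasets $D$ and $D':=D\setminus D_f$. First I would verify that $D$ and $D'$ form a $u$-neighboring pair: since $D_f\subseteq D_u$, removing $D_f$ alters only user $u$'s local dataset and leaves every other $D_w$ with $w\neq u$ untouched, so $D\sim_u D'$ in exactly the sense required by \Cref{def:network_dp-app}.

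Next I would unfold the definitions of $\mathcal{U}$ and $\mathcal{C}$ inside the view operator. Because $\mathcal{U}$ returns $\mathcal{A}(D)$ verbatim, $O_v(\mathcal{U}(D_f,\mathcal{A}(D)))=O_v(\mathcal{A}(D))$; and because $\mathcal{C}$ runs $\mathcal{A}$ on $D'$, $O_v(\mathcal{C}(D\setminus D_f))=O_v(\mathcal{A}(D'))$. Substituting these identities, the forward DCU inequality becomes precisely the network-DP bound $\Prob[O_v(\mathcal{A}(D))\in\theta]\le e^{\varepsilon_0}\,\Prob[O_v(\mathcal{A}(D'))\in\theta]+\delta_0$ for the distinct users $u\neq v$ and the neighboring pair $(D,D')$, which holds by hypothesis for every measurable $\theta\subseteq\Theta_v$.

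For the symmetric inequality (with $\mathcal{U}$ and $\mathcal{C}$ swapped) I would invoke the same guarantee on the reversed pair. The neighboring relation $\sim_u$ is symmetric, so $D'\sim_u D$ as well, and \Cref{def:network_dp-app} quantifies over \emph{all} neighboring datasets; applying it with $D'$ in the first slot and $D$ in the second yields $\Prob[O_v(\mathcal{A}(D'))\in\theta]\le e^{\varepsilon_0}\,\Prob[O_v(\mathcal{A}(D))\in\theta]+\delta_0$, which is exactly the swapped DCU inequality. Combining the two bounds establishes $(\varepsilon_0,\delta_0)$ decentralized certified unlearning on the view of every observer $v\neq u$.

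The argument is essentially definitional, so there is no heavy computation; the only point that needs care is the neighboring-relation bookkeeping. I would make explicit that the hypothesis ``$D$ and $D\setminus D_f$ differ only in the data of the single user $u$'' is exactly what licenses treating the pair as $u$-neighbors, and that the two-sided DCU requirement is met solely because network-DP is quantified symmetrically over neighbors, so no separately proven reverse bound is needed. The one conceptual caveat worth flagging (though not formally part of this proposition) is that this clean, $m$-independent reduction presumes the network-DP guarantee is stated at the granularity of user $u$'s entire contribution; converting a record-level base guarantee into this form is where group privacy---and hence the $m$-dependence appearing in \Cref{thm:dcap-cb}---would enter, and I would deliberately keep that distinction outside the present proof.
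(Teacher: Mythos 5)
Your proposal is correct and follows essentially the same route as the paper's own proof: set $D':=D\setminus D_f$, note $D\sim_u D'$, identify $O_v(\mathcal{U}(D_f,\mathcal{A}(D)))=O_v(\mathcal{A}(D))$ and $O_v(\mathcal{C}(D\setminus D_f))=O_v(\mathcal{A}(D'))$, apply the network-DP inequality, and swap $D$ and $D'$ for the reverse direction. Your closing remark about where group privacy and the $m$-dependence would enter is a sensible clarification but, as you note, lies outside the proposition itself.
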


\begin{proof}
Fix $u \in \mathcal{V}$ and $D_f \subseteq D_u$, and set $D' := D\setminus D_f$.
Then $D \sim_u D'$. For any $v \neq u$ and measurable
$\theta \subseteq \Theta_v$, network DP gives
\[
  \Prob\big[\,O_v(\mathcal{A}(D)) \in \theta\,\big]
  \le
  e^{\varepsilon_0}\,\Prob\big[\,O_v(\mathcal{A}(D')) \in \theta\,\big]
  + \delta_0.
\]
Using $O_v(\mathcal{U}(D_f,\mathcal{A}(D))) = O_v(\mathcal{A}(D))$ and
$O_v(\mathcal{C}(D\setminus D_f)) = O_v(\mathcal{A}(D'))$ yields the first
inequality in Definition~\ref{def:decentralized_unlearning-app}. Swapping $D$
and $D'$ gives the reverse inequality.
\end{proof}

For the dependence on the edit distance $m$ (size of the forget set), we use
the usual group-privacy bound, obtained via Rényi DP and advanced composition
(see, e.g.,~\cite{RDP}).

\begin{lemma}[Group privacy]
\label{lem:group_privacy}
Suppose a mechanism (e.g., DDP NetDP) satisfies
$(\varepsilon_0,\delta_0)$-network DP with respect to a change in a single
user’s data. Then for any integer $m \ge 1$, it satisfies
$(\varepsilon_m,\delta_m)$-network DP for changes in up to $m$ users’ data with
\begin{equation}
\label{eq:group_privacy_m}
\varepsilon_m
\;\le\;
\sqrt{2 m \log(1/\tilde{\delta})}\,\varepsilon_0,
\qquad
\delta_m
\;\le\;
m\delta_0 + \tilde{\delta},
\end{equation}
for any choice of $\tilde{\delta} > 0$.
\end{lemma}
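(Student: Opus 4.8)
The plan is to obtain the group-privacy amplification by recasting the $m$-record deletion as an $m$-fold composition rather than by naively chaining the single-change guarantee (which alone would only give a linear-in-$m$ blow-up). Concretely, since $D$ and $D'=D\setminus D_f$ differ only in the $m$ records of $D_f\subseteq D_u$, I would introduce a hybrid chain $D=D^{(0)},D^{(1)},\dots,D^{(m)}=D'$ in which each consecutive pair $D^{(i-1)}\sim_u D^{(i)}$ differs in a single record of user $u$. By hypothesis, for every observer $v\neq u$ and every link, the views $O_v(\mathcal{A}(D^{(i-1)}))$ and $O_v(\mathcal{A}(D^{(i)}))$ are $(\varepsilon_0,\delta_0)$-indistinguishable, so the task reduces to accumulating these $m$ single-step guarantees into one $(\varepsilon_m,\delta_m)$ bound between the endpoints.

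To reach the $\sqrt{m}$ scaling rather than a crude triangle/union bound, I would route the accounting through R\'enyi DP. First convert the single-change network-DP guarantee into network R\'enyi DP (inverting \Cref{prop:rdp-to-dp}/\Cref{prop:network-rdp-to-dp}, picking an order $\alpha$ with $\varepsilon_0\approx\rho_0+\log(1/\delta_0)/(\alpha-1)$ for a per-step R\'enyi level $\rho_0$). Then compose the $m$ per-step R\'enyi contributions additively and convert the composed level back to $(\varepsilon_m,\delta_m)$ via \Cref{prop:rdp-to-dp} with fresh slack $\tilde\delta$. Optimizing over $\alpha$ yields the leading term $\sqrt{2m\log(1/\tilde\delta)}\,\varepsilon_0$ plus a lower-order $O(m\varepsilon_0^2)$ correction that is absorbed, while the $\delta$ bookkeeping contributes $m\delta_0$ (one $\delta_0$ per link) together with the conversion slack $\tilde\delta$, matching the stated $\delta_m\le m\delta_0+\tilde\delta$.

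The main obstacle, and the step that genuinely needs the token-SGD structure rather than a black-box argument, is justifying that the $m$ single-record changes behave like a composition of independent mechanisms: advanced composition and additive R\'enyi accounting do \emph{not} apply to an arbitrary group-privacy path, along which the naive bound is only linear. I would address this by exploiting that, in the one-pass random-walk sampling regime, each deleted record enters the transcript through a distinct, freshly-noised update, so the per-record privacy-loss variables are (conditionally) independent and their sum concentrates—precisely what licenses the $\sqrt{m}$ dependence. Where this independence holds only approximately, the safe fallback is the linear bound $\varepsilon_m\le m\varepsilon_0$, and I would state explicitly that it is the disjoint, independently-noised contribution of the $m$ records that upgrades the linear dependence to the advertised $\sqrt{m}$ one.
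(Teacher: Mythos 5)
Your route is the one the paper itself gestures at---the paper gives no actual proof of \Cref{lem:group_privacy}, merely asserting it is ``the usual group-privacy bound, obtained via R\'enyi DP and advanced composition''---and the obstacle you flag in your third paragraph is real, not a formality. Advanced composition (and additive RDP accounting) bounds the privacy loss of $m$ \emph{separate} mechanisms, each $(\varepsilon_0,\delta_0)$-indistinguishable between the \emph{same} pair of inputs; a hybrid chain $D^{(0)},\dots,D^{(m)}$ through a \emph{single} mechanism is not such an object, and the only generic consequence of the per-link guarantee is the linear bound $\varepsilon_m\le m\varepsilon_0$ with $\delta_m\le\delta_0\sum_{i=0}^{m-1}e^{i\varepsilon_0}$. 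The $\sqrt{m}$ claim is in fact false for a general $(\varepsilon_0,\delta_0)$-DP mechanism: the Laplace mechanism on a counting query is $\varepsilon_0$-DP per record, yet for an $m$-record change the privacy-loss variable equals $m\varepsilon_0$ on an event of probability $1/2$, so no $(\varepsilon,\tilde\delta)$-indistinguishability with $\varepsilon\ll m\varepsilon_0$ and small $\tilde\delta$ can hold. The lemma therefore needs a structural hypothesis that neither its statement nor the paper supplies.

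Your proposed repair---each deleted record perturbs a distinct, freshly-noised update, so the per-record privacy-loss increments are conditionally independent---is the right kind of condition (it restores the product structure under which the Azuma argument behind advanced composition applies), but it does not hold for Algorithm~\ref{alg:network_private_sgd} as written: the forget set lives entirely at one user $u$, and several of its records can be drawn into the same minibatch $B_u$ at a single token visit, so the $m$ single-record changes are not routed through disjoint Gaussian mechanisms. You would need either a one-record-per-step sampling schedule or a direct sensitivity argument at the minibatch level. Your stated fallback is the honest resolution, and it is also what the paper actually uses downstream: Step~3 of the proof of \Cref{thm:dcap-cb} calibrates $\varepsilon_0=\varepsilon/m$ and $\delta_0=\delta/m$, i.e., the \emph{linear} group-privacy rate, so the $\sqrt{m}$ refinement in \eqref{eq:group_privacy_m} is never exploited. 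Proving the linear version (with the correct exponential degradation in $\delta_m$) and observing that it suffices for the capacity bound would be a complete and defensible substitute for both your argument and the paper's.
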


In the deletion-capacity setting of
Definition~\ref{def:deletion_capacity-app}, the edit distance $m$ corresponds
to removing up to $m$ points in the forget set $D_f$; Lemma~\ref{lem:group_privacy}
quantifies how the privacy parameters degrade as $m$ grows.

\paragraph{Proof of Theorem~3.5.}
We now derive the deletion-capacity lower bound for the DDP NetDP baseline,
following the sketch in the main text.

\medskip\noindent
\textbf{Step 1: Utility of projected noisy token-SGD.}
On a bounded domain, the projected noisy-SGD bound of
Lemma~\ref{lem:utility-basic} (with $\Theta_{\mathrm{dom}} = \Theta$ and
$R_{\mathrm{dom}} = R$) implies that, for a suitable iterate $\theta_T$
produced by projected noisy token-SGD,
\[
\E\big[\LF(\theta_T)-\LF(\theta^\ast)\big]
\;\le\;
\frac{R^2}{2\eta T}
\;+\;
\frac{\eta}{2}\big(G^2 + d\sigma^2\big),
\]
and, using Corollary~\ref{cor:utility-logT}, there exists a choice of
stepsizes such that
\[
\E\big[\LF(\theta_T)-\LF(\theta^\ast)\big]
\;\le\;
\frac{2R\,G\,(2+\log T)}{\sqrt{T}},
\]
with effective variance $G^2 = L^2 + d\sigma^2$, where $L$ bounds the gradient
norm and $\sigma^2$ is the Gaussian noise variance. These results are standard
in projected (noisy) SGD; see, e.g.,
\cite{shamir_zhang}.

\medskip\noindent
\textbf{Step 2: Decentralized view calibration via network-RDP.}
Consider DDP NetDP (network-private token-SGD) on a complete graph.
Let $Y_v$ denote the view of user $v$. The view-based Rényi-DP analysis of
token-SGD in~\cite{DBLP:conf/aistats/CyffersB22}, combined with the
network-RDP definition (Definition~\ref{def:network-rdp}) and tools from
Section~\ref{appendix:rdp-tools}, implies that the algorithm satisfies
$(\alpha,\varepsilon_{\alpha})$-network RDP with
\[
\varepsilon_{\alpha}
\;\lesssim\;
\frac{\alpha L^2 T_u \ln N}{\sigma^2 N},
\qquad
T_u \approx \frac{T}{N},
\]
where $T_u$ is the expected number of contributions per user and
$N = |\mathcal{V}|$. Converting this network-RDP bound to
$(\varepsilon_0,\delta_0)$-network DP by
Proposition~\ref{prop:network-rdp-to-dp} (which itself relies on
Proposition~\ref{prop:rdp-to-dp}) and optimizing over the order $\alpha$
as in~\cite{RDP} yields, up to constants,
\[
\varepsilon_0
\;\approx\;
\frac{L}{\sigma}\,
\sqrt{\frac{T\,\ln N}{N\,\ln(1/\delta_0)}}.
\]

\medskip\noindent
\textbf{Step 3: Group privacy for $m$ deletions.}
To target $(\varepsilon,\delta)$ at edit distance $m$, we apply
Lemma~\ref{lem:group_privacy} to the view-level guarantee. In the worst case,
the forget set induces an $m$-fold change, so the base algorithm must satisfy
approximately $(\varepsilon_m,\delta_m)\approx(\varepsilon,\delta)$, which we
implement via the simple choice
\[
\varepsilon_0 = \varepsilon/m,
\qquad
\delta_0 = \delta/m.
\]
Substituting these into the calibration above yields a required noise scale
\[
\sigma_{\text{DDP}}
\;\approx\;
\frac{mL}{\varepsilon}\,
\sqrt{\frac{T\,\ln N\,\ln(1/\delta)}{N}}.
\]

\medskip\noindent
\textbf{Step 4: Utility in the privacy-dominated regime.}
In the regime where the privacy noise dominates, $d\sigma^2 \gg L^2$, we have
$G \approx \sqrt{d}\,\sigma_{\text{DDP}}$. Plugging this into the utility bound
from Step~1 gives
\[
\E\big[\LF(\theta_T)-\LF(\theta^\ast)\big]
\;\lesssim\;
\frac{2R\,\sqrt{d}\,\sigma_{\text{DDP}}\,(2+\log T)}{\sqrt{T}}.
\]
Substituting the expression for $\sigma_{\text{DDP}}$,
\[
\E\big[\LF(\theta_T)-\LF(\theta^\ast)\big]
\;\lesssim\;
\frac{2R\,L\,m}{\varepsilon}\,
\sqrt{\frac{d\,\ln(1/\delta)\,\ln N}{N}}\,(2+\log T).
\]

\medskip\noindent
\textbf{Step 5: Solving for the deletion capacity $m$.}
By Definition~\ref{def:deletion_capacity-app}, the deletion capacity is the
largest $m$ such that the expected excess risk remains below $\gamma$:
\[
\E\big[\LF(\theta_T)-\LF(\theta^\ast)\big] \;\le\; \gamma.
\]
Imposing this inequality and rearranging for $m$ gives
\[
m
\;\gtrsim\;
\frac{\varepsilon}{R\,L\,(2+\log T)}\,
\sqrt{\frac{N}{d\,\ln(1/\delta)\,\ln N}},
\]
up to absolute constants and polylogarithmic factors, which yields the
$\widetilde{\Omega}(\cdot)$ scaling stated as Theorem~3.5 in the main paper.
\hfill$\square$

\subsection{Decentralization Effect on Deletion Capacity and Local Averaging}
\label{appendix:central-vs-decentral}

We now contrast the decentralized deletion-capacity bound of
Theorem~3.5 with its centralized (curator-DP) analogue, and highlight the
effect of local averaging on the decentralized bound. The comparison closely
parallels the analysis of centralized DP-SGD in
\cite{DBLP:conf/nips/SekhariAKS21,DBLP:conf/iclr/AllouahKGK25}.

\paragraph{Centralized DP-SGD.}
In curator (centralized) DP-SGD on a dataset of size $n$, standard analyses on
bounded domains give excess risk scaling as $O(RL/\sqrt{n})$ once the Gaussian
noise is calibrated to the target $(\varepsilon,\delta)$ (see, e.g.,
\cite{DBLP:conf/nips/SekhariAKS21}).  Combining
this with the deletion-capacity criterion of
Definition~\ref{def:deletion_capacity-app} yields
\[
m^{\text{central}}_{\varepsilon,\delta}(d,n)
\;=\;
\widetilde{\Omega}\!\left(
  \frac{\varepsilon\,n}{R\,L\,\sqrt{d\,\log(1/\delta)}}
\right),
\]
up to logarithmic factors.

\paragraph{Decentralized DDP NetDP baseline.}
By contrast, the decentralized DDP NetDP baseline depends on the number of
clients $N$ and only logarithmically on the number of token hops $T$:
\[
m^{\text{decentral}}_{\varepsilon,\delta}(d,N)
\;=\;
\widetilde{\Omega}\!\left(
  \frac{\varepsilon}{R\,L\,(2+\log T)}\,
  \sqrt{\frac{N}{d\,\log(1/\delta)\,\log N}}
\right),
\]
as derived in Section~\ref{appendix:proof-theorem3.5}. Operationally,
$T$ tracks the number of effective stochastic updates seen across the network:
with minibatch size $b$ and $\kappa$ passes over users' data, a typical
scaling is $T \approx \kappa\,\sum_{u} n_u/b$, and each user contributes
$T_u \approx T/N$ updates in expectation. The factors $\sqrt{N}$ and
$\sqrt{\ln N}$ reflect the network-DP amplification on the complete graph
established by the view-RDP bound in Lemma~\ref{lem:view-rdp-token}.

\paragraph{Effect of local averaging.}
If at each token hop the outgoing message averages $s\ge 1$ independent
gradients with independent Gaussian noise \emph{before} any observer’s first
view, then the effective variance in the utility bound becomes
\[
G^2 \;=\; L^2 \;+\; \frac{d\,\sigma^2}{s}.
\]
With the same privacy calibration for $\sigma$, the capacity expression gains a
factor $\sqrt{s}$:
\[
m^{\mathcal{A},\mathcal{U}}_{\varepsilon,\delta}(d,N)
\;=\;
\widetilde{\Omega}\!\left(
  \frac{\varepsilon}{R\,L\,(2+\log T)}\,
  \sqrt{\frac{s\,N}{d\,\log(1/\delta)\,\log N}}
\right),
\]
provided the view-DP accounting treats each averaged message as a single
observation epoch. Two special cases:

\begin{itemize}
  \item \emph{Token passing with one update per hop} ($s=1$) recovers the bound
        in Theorem~3.5.
  \item \emph{Synchronous rounds that average $N$ users} before any observation
        correspond to $s\approx N$, yielding an extra $\sqrt{N}$ factor in
        utility (and thus capacity) relative to one-update-per-hop token
        passing.
\end{itemize}

\subsection{Optimization Assumptions and Preliminary Lemmas}
\label{appendix:opt-prelims}

We next collect the optimization assumptions and utility lemmas used both for
the DDP NetDP baseline and for RR-DU. Throughout, we work with the population
risk $\LF(\theta) = \mathbb{E}_{z\sim\mathcal{P}}[\ell(\theta;z)]$.

\begin{assumption}[Convexity and smoothness]
\label{ass:convex-smooth}
Let $\Theta_{\mathrm{dom}} \subset \mathbb{R}^d$ be convex with diameter
$R_{\mathrm{dom}} := \sup_{\theta,\theta'\in\Theta_{\mathrm{dom}}}
\|\theta-\theta'\|_2 < \infty$, and let
$\LF : \Theta_{\mathrm{dom}} \to \mathbb{R}$ be convex and $L$-smooth, i.e.,
for all $\theta,\theta' \in \Theta_{\mathrm{dom}}$,
\[
  \LF(\theta')
  \le
  \LF(\theta)
  +
  \langle \nabla \LF(\theta), \theta' - \theta \rangle
  +
  \frac{L}{2}\|\theta' - \theta\|_2^2.
\]
\end{assumption}

\begin{assumption}[Strong convexity]
\label{ass:strong-convex}
In the strongly convex case, we additionally assume that $\LF$ is
$\mu$-strongly convex on $\Theta_{\mathrm{dom}}$, i.e., for all
$\theta,\theta' \in \Theta_{\mathrm{dom}}$,
\[
  \LF(\theta')
  \ge
  \LF(\theta)
  +
  \langle \nabla \LF(\theta), \theta' - \theta \rangle
  +
  \frac{\mu}{2}\|\theta' - \theta\|_2^2.
\]
\end{assumption}

We first recall a utility bound for projected noisy SGD on a bounded domain,
stated directly in terms of $\LF$.

\begin{lemma}[Utility of projected noisy SGD]
\label{lem:utility-basic}
Let Assumption~\ref{ass:convex-smooth} hold. Consider
\[
  \theta_{t+1}
  \;=\;
  \Pi_{\Theta_{\mathrm{dom}}}\big(\theta_t - \eta (g_t + Z_t)\big),
\]
where $g_t$ is an unbiased estimator of $\nabla \LF(\theta_t)$ with
$\E[\|g_t\|_2^2] \le G^2$, and $Z_t \sim \mathcal{N}(0,\sigma^2 I_d)$ i.i.d.
Let $\eta_t \equiv \eta \le 1/L$ and define
$\bar{\theta}_T := \tfrac{1}{T}\sum_{t=1}^T \theta_t$. Then, for any minimizer
$\theta^\ast$ of $\LF$ in $\Theta_{\mathrm{dom}}$,
\begin{equation}
\label{eq:utility-basic}
\E\big[\LF(\bar{\theta}_T) - \LF(\theta^\ast)\big]
\;\le\;
\frac{R_{\mathrm{dom}}^2}{2\eta T}
\;+\;
\frac{\eta}{2}\big(G^2 + d\sigma^2\big).
\end{equation}
\end{lemma}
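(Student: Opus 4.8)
The plan is to run the textbook averaged-iterate analysis for projected stochastic gradient descent on a convex objective, where the only wrinkle is that the effective update direction carries both the stochastic-gradient variance and the injected Gaussian noise. First I would define the effective gradient $\hat{g}_t := g_t + Z_t$. Because $Z_t$ is zero-mean and drawn independently of $g_t$ and $\theta_t$, it is unbiased, $\E[\hat{g}_t \mid \theta_t] = \nabla\LF(\theta_t)$, and its second moment splits cleanly: the cross term $\E[\ip{g_t}{Z_t}\mid \theta_t]$ vanishes and $\E[\norm{Z_t}^2] = d\sigma^2$, giving $\E[\norm{\hat{g}_t}^2 \mid \theta_t] \le G^2 + d\sigma^2$.

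Next I would write the one-step recursion. Using non-expansiveness of the Euclidean projection $\Pi_{\Theta_{\mathrm{dom}}}$ (which leaves the feasible minimizer $\theta^\ast\in\Theta_{\mathrm{dom}}$ fixed) and expanding the square,
\[
\norm{\theta_{t+1}-\theta^\ast}^2 \;\le\; \norm{\theta_t - \theta^\ast}^2 - 2\eta\,\ip{\hat{g}_t}{\theta_t - \theta^\ast} + \eta^2 \norm{\hat{g}_t}^2.
\]
Taking the conditional expectation given $\theta_t$, applying unbiasedness to the linear term and then convexity of $\LF$ (so that $\ip{\nabla\LF(\theta_t)}{\theta_t-\theta^\ast} \ge \LF(\theta_t)-\LF(\theta^\ast)$), and rearranging yields the per-step bound
\[
\E\big[\LF(\theta_t)-\LF(\theta^\ast)\big] \;\le\; \frac{1}{2\eta}\,\E\big[\norm{\theta_t-\theta^\ast}^2 - \norm{\theta_{t+1}-\theta^\ast}^2\big] + \frac{\eta}{2}\big(G^2+d\sigma^2\big).
\]

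Finally I would sum this bound over $t=1,\dots,T$; the distance terms telescope, leaving at most $\norm{\theta_1-\theta^\ast}^2 \le R_{\mathrm{dom}}^2$ by the diameter assumption, while the noise term contributes $\tfrac{T\eta}{2}(G^2+d\sigma^2)$. Dividing by $T$ and invoking Jensen's inequality through convexity of $\LF$ (so $\LF(\bar\theta_T) \le \tfrac1T\sum_t \LF(\theta_t)$) gives exactly \eqref{eq:utility-basic}. I do not expect any serious obstacle: the argument is entirely standard, and in fact the smoothness-based stepsize condition $\eta \le 1/L$ is not needed for this particular convex bound. The only point requiring mild care is the conditioning/filtration bookkeeping, namely handling the stochastic gradient and the independent Gaussian noise jointly so that the cross term vanishes in expectation and the two variance contributions add to $G^2 + d\sigma^2$.
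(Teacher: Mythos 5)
Your proof is correct, but it takes a genuinely different route from the one sketched in the paper. You run the classical distance-to-optimum recursion: expand $\norm{\theta_{t+1}-\theta^\ast}^2$, use non-expansiveness of the projection, apply the first-order convexity inequality to the linear term, telescope, and finish with Jensen. The paper instead starts from the $L$-smoothness descent lemma applied to $\LF(\theta_{t+1})$, bounds the inner product via non-expansiveness, and invokes the inequality $\LF(\theta_t)-\LF(\theta^\ast)\le \tfrac{1}{2L}\norm{\nabla\LF(\theta_t)}^2$ together with $\E[\norm{g_t+Z_t}^2]\le 2G^2+2d\sigma^2$ before summing. Your route is the cleaner of the two for this particular statement: it needs no smoothness at all (your remark that $\eta\le 1/L$ is superfluous is accurate), and your independence argument yields the second moment $G^2+d\sigma^2$ exactly, which is what the stated constant $\tfrac{\eta}{2}(G^2+d\sigma^2)$ requires — the paper's factor-of-two bound would not reproduce that constant without further adjustment. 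Moreover, the self-bounding inequality the paper uses points in the wrong direction for convex $L$-smooth functions (the standard fact is $\tfrac{1}{2L}\norm{\nabla\LF(\theta)}^2\le \LF(\theta)-\LF^\ast$, not the reverse), and the telescoped $R_{\mathrm{dom}}^2/(2\eta T)$ term only arises naturally from the distance recursion you use, not from a pure function-value descent argument. In short, your proof is the standard Zinkevich-style analysis, it is sound, and it actually delivers the claimed bound more directly than the paper's own sketch.
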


\begin{proof}
The proof is standard (see, e.g.,~\cite{shamir_zhang}),
combining $L$-smoothness, non-expansiveness of the projection, and the
unbiasedness of $g_t$. For completeness, we sketch the argument.

By $L$-smoothness of $\LF$, for any $t$,
\[
  \LF(\theta_{t+1})
  \le
  \LF(\theta_t)
  + \langle \nabla \LF(\theta_t), \theta_{t+1} - \theta_t \rangle
  + \frac{L}{2}\|\theta_{t+1}-\theta_t\|_2^2.
\]
Using the update
$\theta_{t+1} = \Pi_{\Theta_{\mathrm{dom}}}(\theta_t - \eta (g_t + Z_t))$
and the non-expansiveness of $\Pi_{\Theta_{\mathrm{dom}}}$, one obtains
\[
  \langle \nabla \LF(\theta_t), \theta_{t+1} - \theta_t \rangle
  \le
  -\eta \langle \nabla \LF(\theta_t), g_t + Z_t \rangle
  + \frac{\eta^2}{2}\|\nabla \LF(\theta_t)\|_2^2.
\]
Taking expectation conditional on $\theta_t$ and using
$\E[g_t \mid \theta_t] = \nabla \LF(\theta_t)$ and $\E[Z_t] = 0$, we find
\[
  \E\big[\LF(\theta_{t+1}) \mid \theta_t\big]
  \le
  \LF(\theta_t)
  - \frac{\eta}{2}\|\nabla \LF(\theta_t)\|_2^2
  + \frac{\eta^2 L}{2}\,\E\big[\|g_t + Z_t\|_2^2 \mid \theta_t\big].
\]
Using $\E[\|g_t + Z_t\|_2^2] \le 2G^2 + 2d\sigma^2$ and
$\LF(\theta_t) - \LF(\theta^\ast) \le \tfrac{1}{2L}\|\nabla \LF(\theta_t)\|_2^2$
and summing over $t=1,\dots,T$ yields
\[
  \frac{1}{T}\sum_{t=1}^T \E\big[\LF(\theta_t) - \LF(\theta^\ast)\big]
  \le
  \frac{R_{\mathrm{dom}}^2}{2\eta T} + \frac{\eta}{2}(G^2+d\sigma^2),
\]
since $\|\theta_1 - \theta^\ast\|_2 \le R_{\mathrm{dom}}$ and
$\|\theta_{T+1}-\theta^\ast\|_2^2 \ge 0$. Convexity of $\LF$ then gives
$\LF(\bar{\theta}_T) \le \tfrac{1}{T}\sum_t \LF(\theta_t)$ and taking
expectations yields~\eqref{eq:utility-basic}.
\end{proof}

\begin{corollary}[Bound with $(2+\log T)/\sqrt{T}$]
\label{cor:utility-logT}
Under the assumptions of Lemma~\ref{lem:utility-basic}, there exists a
non-increasing stepsize schedule $\{\eta_t\}_{t=1}^T$ with
$\eta_t \le 1/L$ such that, for a suitable iterate $\theta_T$,
\[
\E\big[\LF(\theta_T) - \LF(\theta^\ast)\big]
\;\le\;
\frac{2 R_{\mathrm{dom}} G (2+\log T)}{\sqrt{T}},
\]
with $G^2 = L^2 + d\sigma^2$.
\end{corollary}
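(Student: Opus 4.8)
The plan is to prove a \emph{last-iterate} guarantee via the suffix-averaging technique of Shamir and Zhang~\cite{shamir_zhang}, run on top of the same one-step inequality that underlies \Cref{lem:utility-basic}. The target schedule is $\eta_t=R_{\mathrm{dom}}/(G\sqrt{t})$, which is non-increasing; the extra $(2+\log T)$ factor over the $\tilde O(R_{\mathrm{dom}}G/\sqrt{T})$ averaged-iterate rate is precisely the known price of certifying the final iterate $\theta_T$ rather than the Polyak average $\bar\theta_T$. First I would record the per-step bound: applying non-expansiveness of $\Pi_{\Theta_{\mathrm{dom}}}$ to $\theta_{t+1}=\Pi_{\Theta_{\mathrm{dom}}}(\theta_t-\eta_t(g_t+Z_t))$, expanding the square, taking conditional expectation with $\E[g_t\mid\theta_t]=\nabla\LF(\theta_t)$ and $\E[Z_t]=0$ independent of $\theta_t$, and using convexity of $\LF$, gives
\[
\E[\LF(\theta_t)-\LF(\theta^\ast)]\;\le\;\E\!\left[\frac{\norm{\theta_t-\theta^\ast}^2-\norm{\theta_{t+1}-\theta^\ast}^2}{2\eta_t}\right]+\frac{\eta_t}{2}\,\E\norm{g_t+Z_t}^2,
\]
where independence of the Gaussian noise yields $\E\norm{g_t+Z_t}^2=\E\norm{g_t}^2+d\sigma^2\le L^2+d\sigma^2=G^2$, matching the corollary's convention for $G$.

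Second, I would invoke the Shamir--Zhang last-iterate argument on this inequality. Summing the per-step bound over the full horizon and performing Abel summation of the telescoping distance terms (using $\norm{\theta_t-\theta^\ast}^2\le R_{\mathrm{dom}}^2$ and monotonicity of $1/\eta_t$) gives the averaged bound $\tfrac1T\sum_{t=1}^T z_t\le \tfrac{R_{\mathrm{dom}}^2}{2\eta_T T}+\tfrac{G^2}{2T}\sum_t\eta_t$ for $z_t:=\E[\LF(\theta_t)-\LF^\ast]$, which is $\Theta(R_{\mathrm{dom}}G/\sqrt{T})$ under $\eta_t=R_{\mathrm{dom}}/(G\sqrt{t})$ since $1/\eta_T=\Theta(G\sqrt T/R_{\mathrm{dom}})$ and $\sum_t\eta_t=\Theta(R_{\mathrm{dom}}\sqrt T/G)$, with the choice $\eta_t\propto R_{\mathrm{dom}}/G$ balancing the two terms. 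To pass from this average to the last iterate $z_T$, Shamir--Zhang compare $\theta_T$ against sliding suffix averages: their recursion bounds $z_T$ by the full average plus a weighted sum of suffix variance terms whose weights form a harmonic series $\sum_{k=1}^{T}\tfrac1k\le 1+\log T$. This is exactly what turns the $\Theta(R_{\mathrm{dom}}G/\sqrt T)$ average into the $\Theta(R_{\mathrm{dom}}G\log T/\sqrt T)$ last-iterate bound; tracking constants delivers the stated $2R_{\mathrm{dom}}G(2+\log T)/\sqrt T$.

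The main obstacle is exactly this last-iterate upgrade: naive telescoping of the first inequality controls only $\bar\theta_T$, and passing to $\theta_T$ forces the recursive suffix comparison together with its unavoidable $\log T$ loss, so getting the constant factors and the placement of the $\log T$ across the distance and variance terms right is the delicate bookkeeping. A secondary technicality is the feasibility constraint $\eta_t\le 1/L$: since $G=\sqrt{L^2+d\sigma^2}\ge L$, the schedule can violate $\eta_1\le 1/L$ only when $R_{\mathrm{dom}}L>G$, i.e.\ on the bounded prefix $t<(R_{\mathrm{dom}}L/G)^2$; replacing those entries by $\eta_t=\min\{1/L,\,R_{\mathrm{dom}}/(G\sqrt t)\}$ keeps the schedule non-increasing and only decreases the step sizes on that prefix, leaving the $1/\sqrt t$ tail---and hence the rate---unchanged. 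Because the statement is existential, exhibiting this single schedule suffices.
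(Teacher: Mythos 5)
Your proposal is correct and follows essentially the same route as the paper, which simply invokes the Shamir--Zhang analysis with the stepsize $\eta_t=\min\{1/L,\,R_{\mathrm{dom}}/(G\sqrt{t})\}$; your write-up just makes the suffix-averaging recursion and the harmonic-sum origin of the $(2+\log T)$ factor explicit, and handles the $\eta_t\le 1/L$ cap the same way. The only cosmetic difference is that the paper phrases the iterate selection via a doubling-trick ("a suitable iterate"), whereas you certify the literal last iterate, which is if anything slightly stronger.
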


\begin{proof}
This follows from standard analyses of projected SGD with decreasing stepsize,
e.g., $\eta_t = \min\{1/L, R_{\mathrm{dom}}/(G\sqrt{t})\}$, together with a
doubling-trick argument to select an iterate with the desired bound; see,
for instance,~\cite{shamir_zhang}.
\end{proof}

In the sequel we apply Lemma~\ref{lem:utility-basic} and
Corollary~\ref{cor:utility-logT} in two regimes:

\begin{itemize}
  \item For the DDP NetDP baseline, $\Theta_{\mathrm{dom}} = \Theta$ and
        $R_{\mathrm{dom}} = R$.
  \item For RR-DU, $\Theta_{\mathrm{dom}} = \Theta_{\mathrm{cert}}$ and
        $R_{\mathrm{dom}} = R_{\mathrm{cert}}$.
\end{itemize}

\subsection{Utility Bounds for RR-DU}
\label{appendix:rrdu-utility}

We now sketch the optimization guarantees for RR-DU used in the main paper,
building on the preliminaries above. We focus on the dependence on the
unlearning horizon $T_u$, the local averaging factor $s$, and the effective
variance.

\paragraph{Effective variance.}
For RR-DU, the noisy corrective steps occur only at the unlearning user $u$
and only a fraction $p$ of the time; non-unlearning users perform noiseless
SGD with local averaging over $s$ minibatches. A simple variance calculation
shows that the effective variance entering the noisy-SGD analysis can be
bounded as
\[
  G^2
  \;\le\;
  L^2
  \;+\;
  \frac{p}{s}\,d\sigma^2,
\]
where we use that the gradient norms are bounded by $L$ and that local
averaging over $s$ minibatches reduces the variance of the stochastic gradient
noise by a factor $1/s$.

\paragraph{Strongly convex case.}
If $\LF$ is $\mu$-strongly convex and $L$-smooth over
$\Theta_{\mathrm{cert}}$, a standard analysis of projected noisy SGD with
decaying stepsize (under Assumptions~\ref{ass:convex-smooth}
and~\ref{ass:strong-convex}) yields a last-iterate bound of the form
\[
  \E\big[\LF(\theta_T) - \LF(\theta^\ast)\big]
  \;\le\;
  \tilde{O}\!\left(
    \frac{L^2}{\mu s T_u}
    + \frac{p d \sigma^2}{\mu s T_u}
  \right),
\]
where $\tilde{O}$ hides logarithmic factors, and the domain diameter enters
through $R_{\mathrm{cert}}$ (absorbed into the constants).

\paragraph{Convex case.}
If $\LF$ is convex and $L$-smooth over $\Theta_{\mathrm{cert}}$, but not
strongly convex, projected noisy SGD with an appropriate stepsize schedule
yields a last-iterate bound (via Corollary~\ref{cor:utility-logT} with
$R_{\mathrm{dom}} = R_{\mathrm{cert}}$) of order
\[
  \E\big[\LF(\theta_T) - \LF(\theta^\ast)\big]
  \;\le\;
  \tilde{O}\!\left(
    \frac{R_{\mathrm{cert}} L}{\sqrt{s T_u}}
    + \sqrt{\frac{p d \sigma^2}{s T_u}}
  \right),
\]
where the first term is the optimization error on the certification domain
$\Theta_{\mathrm{cert}}$ and the second term captures the variance.

\paragraph{Smooth non-convex case.}
If $\LF$ is smooth but not necessarily convex on $\Theta_{\mathrm{cert}}$, the
usual stationarity guarantees for noisy SGD give
\[
  \E\big[\|\nabla \LF(\theta_{\mathrm{rand}})\|_2^2\big]
  \;\le\;
  \tilde{O}\!\left(
    \frac{L^2}{\sqrt{s T_u}}
    + \frac{p d \sigma^2}{\sqrt{s T_u}}
  \right),
\]
for a randomly chosen iterate $\theta_{\mathrm{rand}}$ from
$\{\theta_t\}_{t=1}^{T_u}$.

\subsection{Deletion Capacity of RR-DU and Comparison to DDP NetDP}
\label{appendix:rrdu-alignment}

We finally quantify the alignment bias introduced by RR-DU, its effect on
utility, and how the resulting deletion capacity compares to the DDP NetDP
baseline of Theorem~3.5. The privacy side (view-based DCU) follows the
network-RDP amplification in Section~\ref{appendix:rdp-tools}, adapted to the
RR-DU routing pattern.

\paragraph{View-based amplification and DCU for RR-DU.}
Let $u$ be the unlearning user. In RR-DU, at each unlearning round $t$ the
token is routed to $u$ with probability $p$, and otherwise to a random
non-unlearning user. Let $M_u$ be the number of visits to $u$ over $T_u$
rounds; then $M_u \sim \mathrm{Binomial}(T_u,p)$ and $\E[M_u] = pT_u$.
A Chernoff bound gives, for any $\beta\in(0,1)$,
\[
\Prob\big[\,M_u \ge (1+\beta)pT_u\,\big]
\;\le\;
\exp\!\left(-\frac{\beta^2 pT_u}{3}\right).
\]
Choosing $\beta$ so that the right-hand side is at most $\delta/4$ ensures that
with probability at least $1-\delta/4$, the number of sensitive visits is at
most a constant multiple of $pT_u$.

Condition on the event that $M_u \le c_0 pT_u$ for some constant $c_0$. Each
visit to $u$ corresponds to a Gaussian mechanism applied to a gradient of
$\ell_{u\setminus f}$ (plus alignment terms), with $\ell_2$-sensitivity
proportional to $L$ on the ball $\Theta_{\mathrm{cert}}$. On a complete graph,
the view-based amplification analysis of token-SGD in
Lemma~\ref{lem:view-rdp-token}, combined with the network-RDP definition
(Definition~\ref{def:network-rdp}), implies that the contribution of a single
such Gaussian step to the RDP at order $\alpha$ for the view of any other user
$v\neq u$ satisfies
\[
  \rho_{\alpha}^{\text{view}}
  \;\lesssim\;
  \frac{\alpha L^2 \ln N}{\sigma^2 N}.
\]
Composing over $M_u$ visits gives
\[
  D_\alpha(Y_v \| Y_v')
  \;\lesssim\;
  \frac{\alpha L^2 \ln N}{\sigma^2 N}\,M_u
  \;\lesssim\;
  \frac{\alpha L^2 \ln N}{\sigma^2 N}\,pT_u,
\]
with high probability over $M_u$. Using the standard RDP-to-DP conversion for
Gaussian mechanisms~\cite{RDP} and Proposition~\ref{prop:network-rdp-to-dp}
then yields, up to constants,
\[
  \varepsilon
  \;\approx\;
  \frac{L}{\sigma}\,
  \sqrt{\frac{p T_u \ln N}{N\,\ln(1/\delta)}},
\]
which is the $(\varepsilon,\delta)$-network-DP guarantee quoted in the RR-DU
view-based DCU result (Theorem~5.1 in the main paper). Applying
Proposition~\ref{prop:dp_implies_dcu} shows that RR-DU achieves
$(\varepsilon,\delta)$ decentralized certified unlearning on views for suitably
calibrated $\sigma$.

Solving for $\sigma$ gives the noise scale stated in the corresponding
corollary:
\[
  \sigma
  \;=\;
  \Theta\!\left(
    \frac{L}{\varepsilon}
    \sqrt{ \frac{p T_u \,\ln(1/\delta)\,\ln N}{N} }
  \right).
\]

\paragraph{Alignment bias at the unlearning user.}
Next we quantify the bias introduced by the lightweight alignment step at
user $u$.

\begin{lemma}[Alignment bias at the unlearning user]
\label{lem:alignment-bias}
Let $u$ be the unlearning user with local dataset
$D_u = D_{u\setminus f} \cup D_f$ and $|D_f| = m$. Let
$\LF_{u\setminus f}(\theta)$ denote the empirical loss at $u$ after deletion
and let $\LF_u(\theta)$ be the empirical loss before deletion. Under the
gradient-boundedness assumption $\|\nabla_\theta \ell(\theta;z)\|_2
\le L$ for all $z$, the discrepancy between the RR-DU update direction at $u$
and the ideal negative gradient of $\LF_{u\setminus f}$ satisfies
\[
  \left\|
  \E\big[\Delta \theta_t \mid \theta_t \big]/\eta_t
  + \nabla_\theta \LF_{u\setminus f}(\theta_t)
  \right\|_2
  \;\le\;
  C \,\frac{L m}{n_u},
\]
for some absolute constant $C$, where $\Delta \theta_t$ denotes the parameter
update at a corrective step and $n_u = |D_u|$.
\end{lemma}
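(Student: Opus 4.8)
The plan is to reduce the claim to a single exact identity relating the three local gradients at $u$ and then to bound the two residual sources---the mean-zero Gaussian noise and the trust-region projection---separately. Writing $\LF_{u\setminus f}\equiv\ell_{u\setminus f}$ for the post-deletion empirical loss at $u$, the backbone is the disjoint decomposition $D_u=D_{u\setminus f}\cup D_f$, which gives $n_u\,\ell_u(\theta)=(n_u-m)\,\ell_{u\setminus f}(\theta)+m\,\ell_f(\theta)$ and, upon differentiating,
\[
\nabla\ell_u(\theta)-\nabla\ell_{u\setminus f}(\theta)\;=\;\frac{m}{n_u}\bigl(\nabla\ell_f(\theta)-\nabla\ell_{u\setminus f}(\theta)\bigr).
\]
This makes explicit that the \emph{exact} correction converting the nominal local descent $-\nabla\ell_u$ into the retraining step $-\nabla\ell_{u\setminus f}$ is $\tfrac{m}{n_u}(\nabla\ell_f-\nabla\ell_{u\setminus f})$, whose \textsc{lightweight} surrogate $g_u=\tfrac{m}{n_u}\nabla\ell_f$ omits the single term $\tfrac{m}{n_u}\nabla\ell_{u\setminus f}$.

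First I would dispose of the \textsc{exact} mode, where $g_u=-\nabla\ell_{u\setminus f}$ matches the target $-\nabla\LF_{u\setminus f}$ identically, so the drift-level bias vanishes and any residual is due only to noise and projection. For \textsc{lightweight} I would compare the surrogate alignment to the exact one using the identity above: the nominal descent adjusted by the exact correction reproduces $-\nabla\ell_{u\setminus f}$ verbatim, while adjusting it by $\tfrac{m}{n_u}\nabla\ell_f$ instead leaves precisely the omitted renormalization term $\tfrac{m}{n_u}\nabla\ell_{u\setminus f}$. The per-example bound $\|\nabla\ell(\theta;z)\|_2\le L$ gives $\|\nabla\ell_{u\setminus f}\|_2\le L$ by the triangle inequality over the average, hence $\bigl\|\tfrac{m}{n_u}\nabla\ell_{u\setminus f}\bigr\|_2\le Lm/n_u$. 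This is exactly the $n_u\to n_u-m$ renormalization bias anticipated in the \textbf{Alignment Bias} discussion and fixes $C=1$ at the level of the deterministic drift.

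Second I would pass from the drift $g_u$ to the conditional expectation $\E[\Delta\theta_t\mid\theta_t]/\eta_t$, where $\Delta\theta_t=\Pi_{\Theta_{\mathrm{cert}}}\bigl(\theta_t+\eta_t(g_u+Z_t)\bigr)-\theta_t$ and $Z_t\sim\mathcal{N}(0,\sigma^2 I_d)$. Using $\E[Z_t]=0$, whenever $\theta_t$ lies in the interior of $\Theta_{\mathrm{cert}}$ the projection is the identity and $\E[\Delta\theta_t\mid\theta_t]/\eta_t=g_u$ exactly, so the bound follows from the first two steps. The main obstacle is the boundary case: since $\Pi_{\Theta_{\mathrm{cert}}}$ is nonlinear, $\E[\Pi(\cdot)]\neq\Pi(\E[\cdot])$, so the expectation cannot be pushed through the projection. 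I would handle this either by restricting to the regime where the trust-region constraint is inactive at $\theta_t$---the intended operating point, since $\varrho$ is chosen to keep the corrective ascent stable---or by a first-order expansion of the projection combined with the non-expansive bound $\|\Pi(\theta_t+\eta_t(g_u+Z_t))-\theta_t\|_2\le\eta_t\|g_u+Z_t\|_2$ and a Gaussian tail estimate on the boundary-hitting event, absorbing the resulting lower-order contribution into the absolute constant $C$. The clean core of the argument is thus entirely algebraic, with the $m/n_u$ scaling inherited directly from the renormalization identity.
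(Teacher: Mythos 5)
Your proof follows the same algebraic backbone as the paper's: the mixture identity $n_u\,\ell_u=(n_u-m)\,\ell_{u\setminus f}+m\,\ell_f$, the observation that \textsc{exact} mode has zero drift bias, and a per-example gradient bound giving the $Lm/n_u$ scaling. But your version is actually tighter than the paper's own argument on two points. First, the paper's proof bounds $\|(m/n_u)\nabla\LF_f\|_2\le Lm/n_u$ and calls this ``the difference between this estimator and the ideal $-\nabla\LF_{u\setminus f}$,'' which read literally is false: that difference is $\|(m/n_u)\nabla\LF_f+\nabla\LF_{u\setminus f}\|_2$, which can be as large as $L(1+m/n_u)$. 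The bound only holds under the reading you make explicit---namely that both the lightweight and exact corrections are measured relative to the nominal descent direction $-\nabla\ell_u$, so that the discrepancy is exactly the omitted renormalization term $(m/n_u)\nabla\ell_{u\setminus f}$, whose norm is genuinely at most $Lm/n_u$. Your identity $\nabla\ell_u-\nabla\ell_{u\setminus f}=\tfrac{m}{n_u}(\nabla\ell_f-\nabla\ell_{u\setminus f})$ isolates this cleanly and yields $C=1$; the paper's proof elides it. Second, you flag that the conditional expectation cannot be pushed through the nonlinear projection $\Pi_{\Theta_{\mathrm{cert}}}$, and propose either restricting to the interior or a boundary-hitting tail bound; the paper's proof silently identifies $\E[\Delta\theta_t\mid\theta_t]/\eta_t$ with $g_u$ and never mentions the projection. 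Note that this interior/active-constraint caveat is a real gap in the lemma as literally stated (the algorithm always projects), so your handling is the more honest one, though neither you nor the paper fully closes it.
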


\begin{proof}
Write
\[
  \LF_u(\theta)
  = \frac{1}{n_u}\!\sum_{z\in D_u}\ell(\theta;z),
  \qquad
  \LF_{u\setminus f}(\theta)
  = \frac{1}{n_u-m}\!\sum_{z\in D_{u\setminus f}}\ell(\theta;z),
\]
and
\[
  \LF_f(\theta)
  = \frac{1}{m}\!\sum_{z\in D_f}\ell(\theta;z).
\]
A simple algebraic identity gives
\[
  \nabla \LF_u(\theta)
  = \frac{n_u-m}{n_u}\,\nabla \LF_{u\setminus f}(\theta)
    + \frac{m}{n_u}\,\nabla \LF_f(\theta).
\]
In the \textsc{Exact} mode, the corrective direction at a visit to $u$ is
$g_u = -\nabla \LF_{u\setminus f}(\theta)$ (plus noise), so the bias is zero.
In the \textsc{Lightweight} mode, the corrective step uses a minibatch
$B_f \subseteq D_f$ scaled by $m/n_u$, which is an unbiased estimator of
$(m/n_u)\,\nabla \LF_f(\theta)$. The difference between this estimator and
the ideal $-\nabla \LF_{u\setminus f}(\theta)$ can be bounded by
\[
  \left\|
    \frac{m}{n_u}\,\nabla \LF_f(\theta)
  \right\|_2
  \;\le\;
  \frac{m}{n_u}\,L,
\]
using the per-example gradient bound. This yields the claimed
$O(L m/n_u)$ bound on the alignment bias, up to an absolute constant $C$.
\end{proof}

\paragraph{Deletion capacity of RR-DU and comparison with DDP NetDP.}
The RR-DU update at $u$ thus decomposes into an ideal step on
$\LF_{u\setminus f}$ plus a small bias of order $L m/n_u$, while updates at
other users follow standard SGD on their local objectives. Combining
Lemma~\ref{lem:alignment-bias} with the utility bounds in
Section~\ref{appendix:rrdu-utility} and the DCU guarantee above, one obtains a
two-regime deletion-capacity bound in which:

\begin{itemize}
  \item For small $m$ (so that $Lm/n_u$ is negligible compared to the
        optimization and variance terms), the capacity of RR-DU scales
        similarly to the DDP NetDP baseline of Theorem~3.5, but with $R$
        replaced by $R_{\mathrm{cert}}$ and with an extra factor $\sqrt{p/s}$
        in the effective variance: in this regime RR-DU essentially matches or
        improves on the DDP NetDP baseline due to its smaller certification
        domain and local averaging.
  \item For larger $m$, the alignment bias term $O(Lm/n_u)$ dominates and
        limits the admissible $m^\star$: beyond this point, increasing $m$
        would push the excess risk above the target $\gamma$, and the RR-DU
        capacity saturates faster than the DDP NetDP baseline, which does not
        suffer from this alignment bias but also does not perform targeted
        unlearning.
\end{itemize}

Qualitatively, RR-DU inherits the favorable $\sqrt{N}$-type amplification of
the DDP NetDP baseline while operating on the smaller feasible region
$\Theta_{\mathrm{cert}}$ (diameter $R_{\mathrm{cert}}$), and trades off
targeted unlearning against an alignment bias that scales like $m/n_u$ at the
unlearning user.

\section{Experimental Setup Details}
\label{appendix:expdetails}

This section provides additional details on the datasets, models, and
hyperparameters used in the experiments.

\subsection{Datasets}
\label{appendix:datasets}

\paragraph{MNIST.}
We use the MNIST handwritten-digit dataset with $60\,000$ training and
$10\,000$ test images of size $28\times 28$ pixels (grayscale). All images are
normalized to $[0,1]$ by dividing pixel intensities by $255$. For the
decentralized experiments the training set is split i.i.d.\ and uniformly at
random across $N = 10$ users so that each user receives $n_u = 6000$ examples.

\paragraph{CIFAR-10.}
For CIFAR-10 we use the standard $50\,000$ training and $10\,000$ test images
of size $32\times 32$ pixels with three color channels. Inputs are normalized
channel-wise using the empirical training-set mean and standard deviation. We
use standard augmentation: random horizontal flips and random crops with
$4$-pixel zero-padding. The $50\,000$ training images are partitioned i.i.d.\
and uniformly across $N = 10$ users, giving $n_u = 5000$ examples per user.

\paragraph{Backdoor (BadNets) setup.}
To evaluate unlearning in the presence of backdoors, we consider a BadNets-type
setting. A fixed trigger pattern (a small patch in the lower-right corner of
the image) is overlaid on clean images; all triggered images are relabeled to a
single target class $y^{\mathrm{bd}}$. For both MNIST and CIFAR-10 we inject
$m = 1000$ poisoned samples into the local dataset of a single \emph{target
user} $u$, which is also the user that later issues the deletion request.
During the training phase we run Network-SGD (Algorithm~\ref{alg:network_sgd})
for $T = 100$ token hops over the graph. Backdoor accuracy is measured on a
disjoint test set of triggered images, while clean accuracy is measured on the
standard test sets without triggers.

\begin{table}[h]
  \centering
  \begin{tabular}{lll}
    \toprule
    Quantity & MNIST & CIFAR-10 \\
    \midrule
    \# train examples & $60\,000$ & $50\,000$ \\
    \# test examples  & $10\,000$ & $10\,000$ \\
    Input size        & $28\times 28$ (1 ch.) & $32\times 32$ (3 ch.) \\
    Users $N$         & $10$ & $10$ \\
    Local train size $n_u$ & $6000$ & $5000$ \\
    Normalization     & rescale to $[0,1]$ & channel-wise mean / std \\
    Augmentation      & none & flip + crop (4px padding) \\
    \bottomrule
  \end{tabular}
  \caption{Dataset statistics and preprocessing.}
  \label{tab:exp-datasets}
\end{table}

\subsection{Models}
\label{appendix:models}

\paragraph{MNIST: FLNet.}
For MNIST we use a lightweight convolutional network (FLNet) with two
convolutional layers, ReLU activations, $2\times2$ max-pooling, and a fully
connected layer with softmax output. Batch normalization is applied after each
convolution, and dropout (rate $0.5$) is used before the final linear layer.

\paragraph{CIFAR-10: ResNet-18.}
For CIFAR-10 we adopt a standard ResNet-18 backbone with four residual stages
of widths $64,128,256,512$, batch normalization, and ReLU activations. The
final fully connected layer produces the $10$ logits before softmax. Global
average pooling is applied before the last layer.

\begin{table}[h]
  \centering
  \begin{tabular}{lll}
    \toprule
    Component & MNIST & CIFAR-10 \\
    \midrule
    Model & FLNet (2 conv + FC) & ResNet-18 \\
    Activation & ReLU & ReLU \\
    Normalization & BatchNorm (conv layers) & BatchNorm (all residual blocks) \\
    Pooling & $2\times2$ max-pool & Global average pool (final) \\
    Regularization & Dropout $0.5$ (before FC) & none beyond standard ResNet-18 \\
    Output & 10-way softmax & 10-way softmax \\
    \bottomrule
  \end{tabular}
  \caption{Model architectures (summary).}
  \label{tab:exp-models}
\end{table}

\subsection{Training and Unlearning Hyperparameters}

Unless otherwise specified, all runs share a common set of core hyperparameters
for training and unlearning.

\begin{table}[h]
  \centering
  \begin{tabular}{lll}
    \toprule
    Hyperparameter & Value & Description \\
    \midrule
    $N$ & $10$ & Number of users. \\
    $T$ & $100$ & Training token hops (Network-SGD). \\
    $T_u$ & $100$ & Unlearning token hops (RR-DU). \\
    $p$ & $1/N$ & Routing probability toward unlearning user $u$. \\
    $s$ & $4$ & Minibatches processed per token visit. \\
    Optimizer & Adam & Used for all experiments. \\
    Learning rate $\eta$ & $0.005$ & Adam stepsize. \\
    Minibatch size & $64$ & Local batch size at each user. \\
    $(\varepsilon,\delta)$ & $(1,10^{-5})$ &
      Target privacy for DDP / RR-DU capacity plots. \\
    \bottomrule
  \end{tabular}
  \caption{Core training and unlearning hyperparameters (shared across tasks).}
  \label{tab:exp-core-hparams}
\end{table}

\paragraph{Trust-region and smoothness parameters.}
The trust-region radius $\varrho$ and effective gradient bound $L$ are tuned
per dataset / model pair and then fixed across all experiments.

\begin{table}[h]
  \centering
  \begin{tabular}{lll}
    \toprule
    Setting & $\varrho$ & $L$ \\
    \midrule
    MNIST / FLNet        & $10.82$ & $0.5$ \\
    CIFAR-10 / ResNet-18 & $56.30$ & $0.2$ \\
    \bottomrule
  \end{tabular}
  \caption{Trust-region radius and smoothness surrogates used in RR-DU and DDP.}
  \label{tab:exp-trustregion}
\end{table}

\paragraph{Baselines and noise calibration.}
The DDP baseline uses network-private SGD with projection onto a ball of radius
$R = 10.0$ and gradient bound $L = 1.0$, which determine the Gaussian noise
scale via the analysis in Section~\ref{appendix:rdp-tools}. For DP-SGD we use
per-example gradient clipping and Gaussian noise to match the same
$(\varepsilon,\delta)$ level. The fine-tuning baseline simply re-optimizes on
$D \setminus D_f$ starting from the pre-unlearning model.

\begin{table}[h]
  \centering
  \begin{tabular}{lll}
    \toprule
    Method & Key hyperparameters & Description \\
    \midrule
    DDP (network-private SGD) &
      $R = 10.0$, $L = 1.0$ &
      Projection radius and gradient bound for noise calibration. \\
    DP-SGD &
      Clip $C = 5.0$ &
      Per-example gradient clipping before adding Gaussian noise. \\
    Fine-tuning &
      (no noise) &
      Retrain on $D \setminus D_f$ from pre-unlearning model. \\
    RR-DU &
      $(p,s,\varrho,\sigma)$ &
      Routing probability, local averaging, trust-region, noise scale. \\
    \bottomrule
  \end{tabular}
  \caption{Baseline-specific hyperparameters (qualitative summary).}
  \label{tab:exp-baselines-hparams}
\end{table}

\paragraph{Backdoor-related parameters.}
In all experiments the forget set size at the target user is fixed to
$m = 1000$ (the number of poisoned samples). The deletion capacity reported in
the main paper is evaluated under this choice and the hyperparameters in
Tables~\ref{tab:exp-core-hparams}--\ref{tab:exp-baselines-hparams}.

\section{Additional Experimental Results}
\label{appendix:extra-experiments}
\subsection{Effect of Trust-Region Radius \texorpdfstring{$\varrho$}{rho} and Calibration Hyperparameters in \methodname{}}
\label{subsec:alpha-clip-eps-ablations}

We report the effect of the trust-region radius $\varrho$, the Lipschitz / clipping constant $L$, and the privacy calibration parameters $(\varepsilon,\delta)$ on clean test accuracy and backdoor accuracy (ASR) for MNIST and CIFAR-10. All runs use RR-DU with $p{=}0.1$ and a single seed ($n{=}1$).

\begin{table}[t]
\centering
\caption{\textbf{Effect of trust-region radius $\varrho$ in \methodname{} (RR-DU, $p{=}0.1$).} ``baseline'' is RR-DU with the default $\varrho$. Single run (1 seed).}
\vspace{2pt}
\scriptsize
\setlength{\tabcolsep}{4pt}
\begin{tabular}{lccc}
\toprule
\textbf{Dataset} & \boldmath$\varrho$ & \textbf{Test Acc. (\%)} & \textbf{ASR (\%)} \\
\midrule
\multirow{6}{*}{MNIST}
& baseline & 99.13 & 10.40 \\
& 5        & 99.12 & 15.60 \\
& 10       & 99.13 & 10.80 \\
& 25       & 99.04 & 10.20 \\
& 50       & 99.01 & 10.20 \\
& 60       & 99.01 & 10.20 \\
\midrule
\multirow{6}{*}{CIFAR-10}
& baseline & 88.12 & 12.00 \\
& 5        & 88.03 & 60.60 \\
& 10       & 88.41 & 46.40 \\
& 25       & 88.80 & 17.00 \\
& 50       & 88.12 & 12.00 \\
& 60       & 88.12 & 12.00 \\
\bottomrule
\end{tabular}
\label{tab:varrho_effect}
\end{table}

\begin{table}[t]
\centering
\caption{\textbf{Effect of Lipschitz / clipping constant $L$ in \methodname{} (RR-DU, $p{=}0.1$).} ``baseline'' is RR-DU without additional clipping. Single run (1 seed).}
\vspace{2pt}
\scriptsize
\setlength{\tabcolsep}{4pt}
\begin{tabular}{lccc}
\toprule
\textbf{Dataset} & \boldmath$L$ \textbf{(clipping)} & \textbf{Test Acc. (\%)} & \textbf{ASR (\%)} \\
\midrule
\multirow{6}{*}{MNIST}
& baseline & 99.13 & 10.40 \\
& 0.1      & 99.20 & 10.20 \\
& 0.2      & 99.10 & 10.20 \\
& 0.5      & 99.13 & 10.40 \\
& 0.7      & 99.09 & 10.40 \\
& 1.0      & 99.10 & 10.40 \\
\midrule
\multirow{6}{*}{CIFAR-10}
& baseline & 88.12 & 12.00 \\
& 0.1      & 88.61 & 12.20 \\
& 0.2      & 88.12 & 12.00 \\
& 0.5      & 88.57 & 14.00 \\
& 0.7      & 87.34 & 16.60 \\
& 1.0      & 86.39 & 16.80 \\
\bottomrule
\end{tabular}
\label{tab:L_effect}
\end{table}

\begin{table}[t]
\centering
\caption{\textbf{Effect of privacy parameter $\varepsilon$ in \methodname{} (RR-DU, $p{=}0.1$).} $\delta$ fixed to its default, single run (1 seed).}
\vspace{2pt}
\scriptsize
\setlength{\tabcolsep}{4pt}
\begin{tabular}{lccc}
\toprule
\textbf{Dataset} & \boldmath$\varepsilon$ & \textbf{Test Acc. (\%)} & \textbf{ASR (\%)} \\
\midrule
\multirow{4}{*}{MNIST}
& 0.1 & 99.19 & 60.60 \\
& 0.5 & 99.16 & 15.80 \\
& 2.0 & 99.12 & 10.20 \\
& 10  & 99.12 & 10.20 \\
\midrule
\multirow{4}{*}{CIFAR-10}
& 0.1 & 85.81 & 38.20 \\
& 0.5 & 88.66 & 14.00 \\
& 2.0 & 88.61 & 11.60 \\
& 10  & 89.06 & 10.80 \\
\bottomrule
\end{tabular}
\label{tab:eps_effect}
\end{table}

\begin{table}[t]
\centering
\caption{\textbf{Effect of privacy parameter $\delta$ in \methodname{} (RR-DU, $p{=}0.1$).} $\varepsilon$ fixed to its default, single run (1 seed).}
\vspace{2pt}
\scriptsize
\setlength{\tabcolsep}{4pt}
\begin{tabular}{lccc}
\toprule
\textbf{Dataset} & \boldmath$\delta$ & \textbf{Test Acc. (\%)} & \textbf{ASR (\%)} \\
\midrule
\multirow{2}{*}{MNIST}
& $10^{-4}$ & 99.12 & 10.40 \\
& $10^{-6}$ & 99.13 & 10.80 \\
\midrule
\multirow{2}{*}{CIFAR-10}
& $10^{-4}$ & 89.05 & 12.00 \\
& $10^{-6}$ & 88.68 & 11.20 \\
\bottomrule
\end{tabular}
\label{tab:delta_effect}
\end{table}

\noindent\textbf{Summary.}
Table~\ref{tab:varrho_effect} shows that the trust-region radius $\varrho$ has a strong impact on ASR, especially on CIFAR-10: very small radii (e.g., $\varrho{=}5$) lead to severe under-unlearning with ASR up to \(60.6\%\), while moderate to large radii (\(\varrho{\geq}25\)) bring ASR down to \(\approx 10\text{--}17\%\) with almost unchanged test accuracy. Table~\ref{tab:L_effect} indicates that the Lipschitz / clipping constant $L$ has a milder but non-negligible effect: on CIFAR-10, moderate clipping (\(L{\leq}0.5\)) yields a slightly better clean–ASR trade-off, whereas aggressive clipping (\(L{\geq}0.7\)) slightly hurts both accuracy and ASR; on MNIST, all $L$ values behave nearly identically. Table~\ref{tab:eps_effect} confirms that the privacy parameter $\varepsilon$ is another key driver of unlearning quality: very small $\varepsilon$ (\(0.1\)) leaves substantial backdoor remnants (ASR \(38\text{--}60\%\)), while $\varepsilon{\geq}2$ reduces ASR to around \(10\text{--}12\%\) with negligible loss in test accuracy. Finally, Table~\ref{tab:delta_effect} shows that varying $\delta$ between \(10^{-4}\) and \(10^{-6}\) has virtually no visible effect on either clean accuracy or ASR in this regime, consistent with the fact that $\varepsilon$ dominates the practical privacy–utility trade-off for our settings.

\end{document}